\definecolor{perfblue}{RGB}{64, 114, 175}
\DeclareMathOperator{\argmin}{\text{arg}\min}
\newcommand{\cF}{\mathcal{F}}
\newcommand{\cA}{\mathcal{A}}
\newcommand{\cS}{\mathcal{S}}
\newcommand{\cT}{\mathcal{T}}
\newcommand{\bE}{\mathbb{E}}
\newcommand{\action}{\Vec{a}}
\newcommand{\regret}{\mathcal{R}_{\Phi}}
\newcommand{\TV}{\mathsf{TV}}
\newtheorem{theorem}{Theorem}[section]
\newtheorem{lemma}{Lemma}[section]
\newtheorem{remark}{Remark}[section]
\newtheorem{assumption}{Assumption}[section]
\newtheorem{definition}{Definition}[section]
\newtheorem{corollary}[theorem]{Corollary}
\newcommand{\notimplies}{%
  \mathrel{{\ooalign{\hidewidth$\not\phantom{=}$\hidewidth\cr$\implies$}}}}
\renewcommand{\algorithmiccomment}[1]{\textcolor{perfblue}{\transparent{0.8}\small{\texttt{\textbf{//\hspace{2pt}#1}}}}}
\title{Formatting Instructions For NeurIPS 2024}
\author{%
    Jingwu Tang \\
  Carnegie Mellon University\\
  \texttt{jingwutang@cmu.edu} \\
  \And
  Gokul Swamy \\
  Carnegie Mellon University \\
  \texttt{gswamy@cmu.edu} \\
  \AND
  Fei Fang \\
  Carnegie Mellon University \\
  \texttt{feifang@cmu.edu} \\
  \And
  Zhiwei Steven Wu \\
  Carnegie Mellon University \\
  \texttt{zstevenwu@cmu.edu} \\
}
\title{Multi-Agent Imitation Learning:  \\ Value is Easy, Regret is Hard}
\begin{document}

\maketitle

\begin{abstract}
We study a multi-agent imitation learning (MAIL) problem where we take the perspective of a learner attempting to \textit{coordinate} a group of agents based on demonstrations of an expert doing so. Most prior work in MAIL essentially reduces the problem to matching the behavior of the expert \textit{within} the support of the demonstrations. While doing so is sufficient to drive the \textit{value gap} between the learner and the expert to zero under the assumption that agents are non-strategic, it does not guarantee robustness to deviations by strategic agents. Intuitively, this is because strategic deviations can depend on a counterfactual quantity: the coordinator's recommendations outside of the state distribution their recommendations induce. In response, we initiate the study of an alternative objective for MAIL in Markov Games we term the \textit{regret gap} that explicitly accounts for potential deviations by agents in the group. We first perform an in-depth exploration of the relationship between the value and regret gaps. First, we show that while the value gap can be efficiently minimized via a direct extension of single-agent IL algorithms, even \textit{value equivalence} can lead to an arbitrarily large regret gap. This implies that achieving regret equivalence is harder than achieving value equivalence in MAIL. We then provide a pair of efficient reductions to no-regret online convex optimization that are capable of minimizing the regret gap \textit{(a)} under a coverage assumption on the expert (\texttt{MALICE}) or \textit{(b)} with access to a queryable expert (\texttt{BLADES}).

\end{abstract}

\section{Introduction}\label{sec:intro}

We consider the problem of a \textit{mediator} learning to \textit{coordinate} a group of strategic agents via \textit{recommendations} of actions to take without knowledge of their underlying utility functions (e.g. routing a group of drivers through a road network). Given the difficulty of manually specifying the quality of a recommendation in such situations, it is natural to provide the mediator with data of desired coordination behavior, turning our problem into one of \textbf{\textit{multi-agent imitation learning}} (MAIL, \cite{waugh2013computational, fu2017learning, song2018multi, vinitsky2022nocturne, gulino2024waymax}). In our work, we explore the nuances of a fundamental MAIL question:
\begin{center}
    \textbf{\textit{What is the right objective for the learner in a multi-agent imitation learning problem?}}
\end{center}

We can begin to answer this question by exploring the following scenario: consider developing a routing application to provide personalized route recommendations ($\sigma$) to a group of users with joint policy $\pi$ (e.g. the routing policy that underlies the recommendations provided in Google Maps \citep{barnes2023massively}). As usual in imitation learning (IL), we assume we are given access to \textit{demonstrations} from an \textit{expert} $\sigma_E$ (e.g. a past iteration of the application). We can imagine two kinds of users of our application (i.e. \textit{agents}): \textit{non-strategic} users who blindly follow the recommendations of our routing application and \textit{strategic} users who will \textit{deviate} from our recommendations if they have the incentive to do so under their (unknown) personal utility function (e.g. we recommend a long route to a busy driver). We use $J_i(\pi_{\sigma})$ below to denote the value of the mediator's learned policy $\sigma$ under the $i$th agent's utility.

\noindent \textbf{Case 1: No Strategic Agents.} In the idealized situation where all agents in the population are perfectly obedient, we can essentially treat a MAIL problem as a \textit{single-agent} IL (SAIL) problem over joint policies. It is therefore natural to use a direct extension of the well-studied \textbf{\textit{value gap}} criterion from the SAIL literature \citep{abbeel2004apprenticeship, ziebart2008maximum, swamy2021moments, swamy2022minimax, swamy2022causal, swamy2022sequence, swamy2023inverse, ren2024hybrid} to the multi-agent setting:
\begin{equation}
    \max_{i \in [m]} J_i(\pi_{\sigma_E}) - J_i(\pi_{\sigma}).
    \nonumber
\end{equation}
Intuitively, driving the value gap to 0 (i.e. achieving \textit{value equivalence} in the terminology of \citep{grimm2020value}) implies that along as long as all agents blindly follow our recommendations, we have learned a policy that performs at least as well as that of the expert from the perspective of \textit{any} agent in the population. In our running routing application example, this means that if no driver deviates from the previous behavior, all drivers will be at least as happy as they were with the prior iteration of the application.

\noindent \textbf{Case 2: Strategic Agents.} Of course for any MAIL problem where agents actually have \textit{agency}, we need to account for the fact that agents may deviate from our recommendations if it appears beneficial to do so from their subjective perspective. Let us denote the class of deviations (i.e. policy modifications) for agent $i$ as $\Phi_i$. Then, we can define the \textbf{\textit{regret}} induced by the mediator's policy as
\begin{equation}
\regret(\sigma) := \max_{i \in [m]} \max_{\phi_i \in \Phi_i} (J_i(\pi_{\sigma,\phi_i}) - J_i(\pi_{\sigma})),
\nonumber
\end{equation}
where \( \phi_i \) is a strategic deviation of agent \( i \) and $\pi_{\sigma, \phi_i}$ is the \textit{joint agent policy induced by all agents other than $i$ following $\sigma$'s recommendations}. Intuitively, regret captures the maximum incentive any agent in the population has to deviate from the mediator's recommendations. We can then compare this metric between the expert and learner policies to arrive at the notion of a \textbf{\textit{regret gap}} \citep{waugh2013computational}:
\begin{equation}
\regret(\sigma) - \regret(\sigma_E).
\nonumber
\end{equation}
Driving the regret gap to zero (i.e. achieving \textit{regret equivalence}) implies that \textit{even if agents are free to deviate, our learned policy is at least as good as the expert's from the perspective of an arbitrary agent in the population}. In our preceding example, this means that despite the fact that they are not forced to follow our application's recommendations, all agents would have no more incentive to take an alternate route than they did under the previous iteration of the application.

A simple decomposition allows us to show that a small value gap does not in general imply a small regret gap. Consider the performance difference between the learner's policy under all obedient ($J_i(\pi_\sigma)$) and a deviating $i$th agent ($J_i(\pi_{\sigma,\phi_i})$). We can decompose this quantity into the following:
\begin{equation}
\begin{aligned}
J_i(\pi_{\sigma,\phi_i}) - J_i(\pi_{\sigma}) &= \underbrace{(J_i(\pi_{\sigma,\phi_i}) - J_i({\pi}_{\sigma_E,\phi_i}))}_{(\text{I: value gap under } \phi_i)} + \underbrace{(J_i({\pi}_{\sigma_E,\phi_i})) - J_i(\pi_{\sigma_E}))}_{(\text{II: expert regret under } \phi_i)} + \underbrace{(J_i(\pi_{\sigma_E}) -  J_i(\pi_{\sigma}))}_{(\text{III: SAIL value gap})}, \nonumber
\end{aligned}
\end{equation}
where we use ${\pi}_{\sigma_E, \phi_i}$ to denote agent joint behavior under expert recommendations and deviation $\phi_i$. Term III is the standard single-agent value gap (i.e. the performance difference under the assumption that no agents deviate). Term II is the expert's regret under deviation $\phi_i$ (i.e. a quantity we cannot control). Thus, the difference between the regret gap and value gap objectives can be boiled down to Term I: $J_i(\pi_{\sigma,\phi_i}) - J_i({\pi}_{\sigma_E,\phi_i})$. Observe that because of the state distribution shift induced by deviation $\phi_i$, minimizing Term III doesn't give us any guarantees with respect to Term 1. This underlies our key insight: \textit{\textbf{regret is hard in MAIL as it requires knowing what the expert would have done in response to an arbitrary agent deviation.}} More explicitly, our contributions are three-fold:

\textbf{1. We initiate the study of the \textit{regret gap} for MAIL in Markov Games.}
Unlike the value gap -- the standard objective in single-agent IL -- the regret gap captures the fact that agents in the population may choose to deviate from the mediator's recommendations. The shift from value to regret gap captures what is fundamentally different about the SAIL and the MAIL problems.

\textbf{2. We investigate the relationship between regret gap and the value gap.} We show that under the assumption of complete reward and deviation function classes, regret equivalence implies value equivalence. However, we also prove that value equivalence provides essentially no guarantees on the regret gap, establishing a fundamental limitation of applying SAIL algorithms to MAIL problems.

\textbf{3. We provide a pair of efficient algorithms to minimize the regret gap under certain assumptions.}
While regret equivalence is hard to achieve in general as it depends on counter-factual expert recommendations, we derive a pair of efficient reductions for minimizing the regret gap that operate under different assumptions: \texttt{MALICE} (which operates under a coverage assumption) and \texttt{BLADES} (which requires access to a queryable expert). We prove that both algorithms can provide $O(H)$ bounds on the regret gap, where $H$ is the horizon, matching the strongest known results for the value gap in single-agent IL. See \cref{tab:bounds} for a summary of our regret gap bounds.
\begin{table}[ht]
    \centering
    \begin{tabular}{|c|c|c|c|}
    \hline
    & Assumption & Upper Bound & (Matching) Lower Bound \\
    \hline
    J-BC & $\beta$-Coverage & $O\left(\frac{1}{\beta}\epsilon u H\right)$, \cref{thm:BC-coverage} & $\Omega\left(\frac{1}{\beta}\epsilon u H\right)$, \cref{thm:lower-bound-coverage} \\
    \hline
    J-IRL & $\beta$-Coverage & $O\left(\frac{1}{\beta}\epsilon u H\right)$, \cref{thm:IRL-coverage} & $\Omega\left(\frac{1}{\beta}\epsilon u H\right)$, \cref{cor:lower-bound-coverage-irl} \\
    \hline
    \texttt{MALICE} (ours) & $\beta$-Coverage & $O\left(\epsilon u H\right)$, \cref{thm:MA-ALICE} & $\Omega\left(\epsilon u H\right)$, \cref{thm:lower-bound-ALICE} \\
    \hline
    \texttt{BLADES} (ours) & Queryable Expert & $O\left(\epsilon u H\right)$, \cref{thm:MA-DAGGER} & $\Omega\left(\epsilon u H\right)$, \cref{thm:lower-bound-DAGGER} \\
    \hline
\end{tabular}

    \vspace{0.2cm}
    \caption{A summary of our results: upper and lower bounds on the regret gap (i.e. $\regret(\sigma) - \regret(\sigma_E)$) of various approaches to multi-agent IL. Here, $\beta$ is the coverage constant in \cref{assump:coverage}, $u$ is the recoverability constant in \cref{assump:recoverability}, $H$ is the horizon.}
    \label{tab:bounds}
    \vspace{-0.8cm}
\end{table}
\section{Related Work}
\noindent \textbf{Single-Agent Imitation Learning.} Much of the theory of imitation learning focuses on the single-agent setting \citep{osa2018algorithmic}. Offline approaches like behavioral cloning (BC, 
\citep{pomerleau1988alvinn}) reduce the problem of imitation to mere supervised learning. Ignoring the covariate shift in state distributions between the expert and learner policies can cause \textit{compounding errors} \citep{ross2010efficient, swamy2021moments} and associated poor performance. In response, interactive IL approaches like inverse reinforcement learning (IRL, \citep{abbeel2004apprenticeship, ziebart2008maximum}) allow the learner to observe the consequences of their actions during the training procedure, preventing compounding errors \citep{swamy2021moments}. However, such approaches can be rather sample-inefficient due to the need to repeatedly solve a hard RL problem \citep{swamy2023inverse, ren2024hybrid}. Alternative approaches include interactively querying the expert to get action labels on the learner's induced state distribution (DAgger, \citep{ross2010efficient}) or, assuming full coverage of the demonstrations, using importance weighting to correct for the covariate shift (ALICE, \citep{spencer2021feedback}). Our \texttt{BLADES} and \texttt{MALICE} algorithms can be seen as the regret gap analog of the value gap-centric DAgger and ALICE algorithms, operating under the same assumptions.

\noindent \textbf{Multi-Agent Imitation Learning.} The concept of the regret gap was first introduced in the exceptional work of \citet{waugh2013computational}, though their exploration was limited to Normal Form Games (NFGs), in contrast to the more general Markov Games (MGs) we focus on. \citet{fu2021evaluating} briefly consider the regret gap in Markov Games (MGs) but do not explore its properties nor provide algorithms for efficient minimization. Most empirical MAIL work \citep{song2018multi, le2017coordinated, bhattacharyya2018multi, vinitsky2022nocturne, gulino2024waymax} is value gap-based, while we take a step back and ask what the right objective is for MAIL in the first place.

\noindent \textbf{Inverse Game Theory.}
Another line of work focuses on inverse game theory in Markov Games \citep{lin2019multi, goktas2023generative}, where the goal is to recover a set of utility functions that rationalize the observed agent behavior, rather than learning to coordinate from demonstrations. A detailed comparison between the goals of our work at that of inverse game theory provided in \cref{app:comparison}.

\section{Preliminaries}\label{sec:prelims}
We begin with the notation we will use in our paper. Throughout, we use $\Delta(X)$ denote the space of probability distribution over a set $\mathcal{X}$. We will use $\ell$ to denote the loss function each algorithm optimizes, which should be thought of as a convex upper bound on the total variation distance $\TV$. We use $\ell_{\TV}$ when the loss function is exactly the TV distance.

\noindent \textbf{Markov Games.} We use $MG(H,\cS,\cA,\cT,\{r_i\}_{i=1}^m,\rho_0)$ to denote a \textit{Markov Game} (MG) between $m$ agents. Here, $H$ is the horizon, $\cS$ is the state space, and $\cA=\cA_1\times...\times\cA_m$ is the joint action space for all agents. We use $\cT:\cS\times\cA\rightarrow\Delta(\cS)$ to denote the transition function.
Furthermore, the reward (utility) function for agent $i \in [m]$ is denoted by $r_{i}:\cS\times\cA\rightarrow[-1,1]$.
Lastly, we use $\rho_0$ to denote the initial state distribution from which the initial state $s_0\sim\rho_0$ is sampled. 

\noindent \textbf{Learning to Coordinate.} Rather than considering the problem of learning individual agent policies in the MG, we take the perspective of a \textit{mediator} who is giving recommendations to each agent to help them coordinate their behavior (e.g. a smartphone mapping application providing directions to a set of users). At each time step, the mediator gives each agent $i$ a private \textit{action recommendation} $a_i$ to take at the current state $s$. Critically, no agent observes the recommendations the mediator provides to another agent. We can represent the mediator as a Markovian joint policy $\sigma \in \Sigma$, where $\sigma:\cS\rightarrow\Delta({\cA})$.
We use $\sigma(\action|s)$ to denote the probability of recommending joint action $\action$ in state $s$. We use $\pi:\cS\rightarrow\Delta{(\cA)}$ to denote the joint policy that agents play in response to the mediator's policy. When agents exactly follow the mediator's recommendations, we denote their joint policy as $\pi_{\sigma}$.

A trajectory $\xi\sim\pi=\{s_h,\action_h\}_{h=1,...,H}$ refers to a sequence of state-action pairs generated by starting from $s_0\sim\rho_0$ and repeatedly sampling joint action $\action_h$ and next states $s_{h+1}$ from $\pi$ and $\cT$ for $H-1$ time steps. Let $d_{h}^\pi$ denote the \textit{state visitation distribution} at timestep $h$ following $\pi$ and let $d^{\pi}=\frac{1}{H}\sum_{h=1}^H d_{h}^\pi$ be the average state distribution. Let $\rho^\pi_h(s_h,\action_h)$ denote the \textit{occupancy measure} -- i.e., probability of reaching state $s$ and then taking action $\action$ at time step $h$. By definition, we know that $\forall_h,\sum_{s,\action}\rho^\pi_h(s,\action)=1$. Let $\rho^\pi(s,\action)=\frac{1}{H}\sum_{h=1}^H\rho_h^\pi(s,\action)$ be the average occupancy measure.

We use $V_{i,h}^{\pi}$ to denote the expected cumulative reward of agent $i$ under this policy from time step $h$, i.e. $V_{i,h}^\pi(s)=\mathbb{E}_{\xi\sim\pi}[\sum_{t=h}^Hr_{i}(s_t,\action_t)|s_h=s]$. We define Q-value function of agent $i$ as $Q_{i,h}^\pi(s,\action)=\mathbb{E}_{\xi\sim\pi}[\sum_{t=h}^Hr_{i}(s_t,\action_t)|s_h=s,\action_h=\action]$.  We define advantage of an agent $i$ to be the difference between its Q-value on a selected action and the V-value on the state, i.e. $A_{i,h}^\pi(s,\action)=Q_{i,h}^\pi(s,\action)-V_{i,h}^\pi(s)$. We also define the performance of a policy $\pi$ from the perspective of agent $i$ as $J_i(\pi)=\bE_{s_0\sim\rho_0}[\mathbb{E}_{\xi\sim\pi}[\sum_{t=1}^Hr_{i}(s_t,\action_t)|s=s_0]]$. Observe that performance is the inner product between the occupancy measure and the agent's reward function, i.e. $J_i(\pi)=H\sum_{s,\action}\rho^\pi(s,\action)r_{i}(s,\action)$.

\noindent \textbf{Correlated Equilibria.} We now introduce the notion of a \textit{correlated equilibrium} (CE, \citet{aumann1987correlated}). First, we define a \textit{strategy deviation} $\phi_i$ for the $i$-th agent as a map $\phi_i:\cS\times\cA_i\rightarrow\cA_i$.
Intuitively, a strategy deviation captures how the agent responds to the current state of the world and the recommendation of the mediator -- they can either obey (in which case $\phi_i(s, a) = a$) or defect (in which case $\phi_i(s, a) \neq a$). Let $\Phi_i$ be the set of deviations for agent $i$, which is a subset of all possible deviations. We use $\Phi:=\{\Phi_i\}_{i=1}^m$ to denote deviations for all agents. We assume that for all $i$, the identity mapping $\phi_i(s,a)\equiv a$ is in $\phi_i$. We use $\pi_{\sigma,\phi_i}$ to denote $(\phi_i\circ \pi_{\sigma,i})\odot\pi_{\sigma,-i}$: the joint agent policy induced by mediator policy $\sigma$ being over-ridden by deviation $\phi_i$. We can now define a CE.

\begin{definition}[Regret and CE in General-Sum MGs]
Let $\sigma\in\Sigma$ be the mediator's policy in a Markov Game, and $\Phi_i$, $i \in [m]$ be the deviation classes for each agent. Then,
\begin{enumerate}
\item We define the \textit{regret} of a mediator policy $\sigma$ to be 
\begin{equation}\label{eq:regret}
\regret(\sigma) := \max_{i \in [m]} \max_{\phi_i \in \Phi_i} (J_i(\pi_{\sigma,\phi_i}) - J_i(\pi_{\sigma})),
\end{equation}
\item We say a mediator with policy $\sigma$ induces an $\epsilon$-approximate Correlated Equilibrium (CE) if
\begin{equation}
    \regret(\sigma)\leq\epsilon.
\end{equation}
\end{enumerate}
\end{definition}
Intuitively, regret captures the maximum utility any agent can gain by defecting from the mediator's recommendation. A CE is an induced joint policy where no agent has a large incentive to deviate.

\section{On the Relationship between the Value Gap and the Regret Gap}

As sketched above, we consider two potential objectives for the learner in MAIL:

\begin{definition}[Value Gap] We define the \textit{value gap} between the expert's policy $\sigma_E$ and the learner's policy $\sigma \in \Sigma$ as
\begin{equation}\label{eq:value-gap}
    \max_{i\in[m]}(J_i(\pi_{\sigma_E})-J_i(\pi_{\sigma})). 
\end{equation}
\end{definition}
\begin{definition}[Regret Gap] We define the \textit{regret gap} between the expert's policy $\sigma_E$ and the learner's policy $\sigma \in \Sigma$ as
\begin{equation}\label{eq:regret-gap}
    \regret(\sigma)-\regret(\sigma_E) = \max_{i \in [m]} \max_{\phi_i \in \Phi_i} (J_i(\pi_{\sigma,\phi_i}) - J_i(\pi_{\sigma})) - \max_{k \in [m]} \max_{\phi_k \in \Phi_k} (J_k(\pi_{\sigma_E,\phi_k}) - J_k(\pi_{\sigma_E})).
\end{equation}
\end{definition}

We say that the learner's policy satisfies \textit{value / regret equivalence} when the value / regret gap is $0$. We now explore the relationship between the value and regret gap in MAIL, \footnote{We prove in Appendix \ref{app:sail} that the value and regret gaps are equivalent in single-agent IL.} summarized in Figure \ref{fig:relation}. We use $J_i(\pi_{\sigma},f)$ and $\regret(\sigma,f)$ to denote the value/regret of policy $\sigma$ under the reward function $f$.

\begin{figure}
    \centering
    \includegraphics[width=0.8\columnwidth]{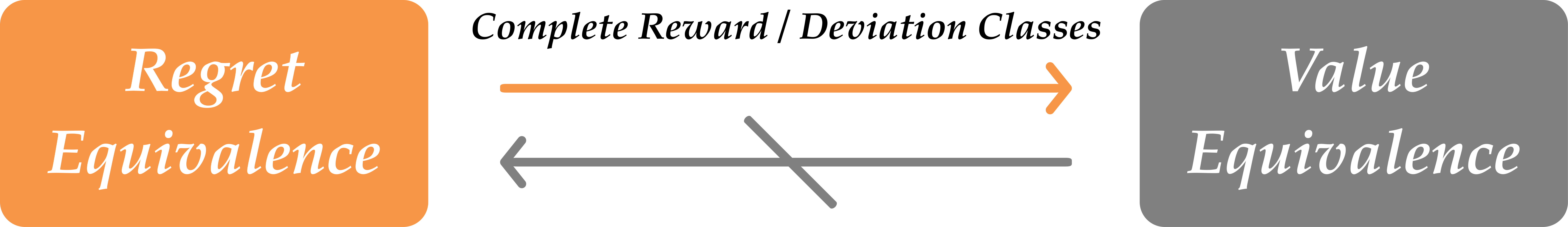}
    \caption{Under expressive enough reward function and deviation classes, regret equivalence implies value equivalence but not vice versa, making the regret gap a ``stronger'' objective than the value gap.}
    \label{fig:relation}
\end{figure}

\subsection{Regret Equivalence \texorpdfstring{$+$}{+} Complete Reward / Deviation Class \texorpdfstring{$\implies$}{Implies} Value Equivalence}

First, we show that if the reward function class and deviation class are both \textit{complete}, then regret equivalence implies value equivalence. We say that the reward function class is complete when $\cF= \{ \cS \times \cA \to [-1, 1]\}$ (i.e. all convex combinations of state-action indicators), and that the deviation class is complete if for every agent $i$, $\Phi_i = \{\cS\times\cA_i\rightarrow\cA_i\}$ (i.e. all possible deviations).

\begin{theorem}[Complete  Classes]\label{thm:complete-class}
    If the reward function class $\cF$ and deviation class $\Phi$ are complete and regret equivalence is satisfied (i.e. $\sup_{f\in\cF}(\regret(\sigma,f)-\regret(\sigma_E,f))=0$), then value equivalence is also satisfied: $\sup_{f\in\cF}\max_{i\in[m]}(J_i(\pi_{\sigma_E},f)-J_i(\pi_{\sigma},f))=0$. \hyperref[pf:thm:complete-class]{[Proof]}
\end{theorem}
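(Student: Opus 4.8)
The plan is to prove the contrapositive: assuming value equivalence fails, exhibit a single reward $f \in \cF$ with $\regret(\sigma,f) > \regret(\sigma_E,f)$, contradicting regret equivalence. The first step is a clean reduction of the value side. Since $J_i(\pi,f) = H\langle\rho^\pi,f\rangle$ and the obedient joint policy $\pi_\sigma$ does not depend on the index $i$, the value gap at a reward $f$ is just $H\langle\rho^{\pi_{\sigma_E}} - \rho^{\pi_\sigma}, f\rangle$; ranging over the complete class $\cF$, value equivalence is therefore equivalent to the occupancy-measure identity $\rho^{\pi_\sigma} = \rho^{\pi_{\sigma_E}}$. So it suffices to assume $\rho^{\pi_\sigma} \neq \rho^{\pi_{\sigma_E}}$ and produce the desired separating reward.

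To build such an $f$, I would first record that under a complete deviation class the identity deviation is always available, so $\regret(\sigma,f) = H\max_{i\in[m]}\big(\mathrm{BR}_i(\sigma,f) - \langle\rho^{\pi_\sigma},f\rangle\big) \ge 0$, where $\mathrm{BR}_i(\sigma,f)$ is the value agent $i$ gets by optimally post-processing $\sigma$'s recommendation while the other agents obey; moreover $\regret(\sigma,\cdot)$ is invariant under adding constants to $f$ and positively homogeneous (occupancies sum to one), so $f$ may be normalized freely. I would then try to choose $f$ so that obeying $\sigma_E$ is a best response for every agent simultaneously — making $\regret(\sigma_E,f) = 0$ — while obeying $\sigma$ is not a best response for at least one agent. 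Since the discrepancy $\rho^{\pi_\sigma} - \rho^{\pi_{\sigma_E}}$ is nonzero, one expects a reward in the (polyhedral, full-dimensional) cone of rewards that certify $\pi_{\sigma_E}$ as a correlated equilibrium but that still has $\langle\rho^{\pi_{\sigma_E}} - \rho^{\pi_\sigma}, f\rangle \neq 0$ — the cone being full-dimensional, it cannot lie inside the discrepancy hyperplane — and such an $f$ should force some agent to strictly prefer deviating under $\sigma$, giving $\regret(\sigma,f) > 0 = \regret(\sigma_E,f)$. A natural concrete candidate is $f = \mathrm{sign}(\rho^{\pi_{\sigma_E}} - \rho^{\pi_\sigma})$ (entrywise), the reward that maximally separates the two occupancies in value.

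I expect the crux to be exactly the simultaneous control of the best-response terms $\mathrm{BR}_i(\sigma,f)$ and $\mathrm{BR}_i(\sigma_E,f)$: certifying that the chosen $f$ really does keep $\pi_{\sigma_E}$ incentive-compatible for all agents while breaking incentive-compatibility for $\pi_\sigma$, and that no agent's best response under $\sigma_E$ secretly inflates $\regret(\sigma_E,f)$ above $\regret(\sigma,f)$. This is precisely where MAIL departs from single-agent IL: there $\mathrm{BR}(\sigma,f)$ equals the policy-independent optimum $V^\star$, so it cancels in the regret gap and the latter collapses to the value gap; with several agents the non-deviators' obedient behavior — and hence the value a deviator can reach — depends on whose recommendations are in effect, so no such cancellation occurs. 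The two completeness assumptions are what supply the needed slack: completeness of $\cF$ gives the freedom to sculpt a reward whose incentive structure $\pi_{\sigma_E}$ satisfies but $\pi_\sigma$ violates, and completeness of $\Phi$ gives the deviating agent enough latitude — any recommendation-dependent map, including ignoring the recommendation and playing an arbitrary deterministic Markov policy — to actually realize the resulting gain.
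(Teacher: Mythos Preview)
Your reduction is right: with a complete reward class, value equivalence is exactly $\rho^{\pi_\sigma}=\rho^{\pi_{\sigma_E}}$, so the task is to extract occupancy equality from regret equivalence. But the construction you sketch after that point has a real gap. You want an $f$ for which $\pi_{\sigma_E}$ is a correlated equilibrium while $\pi_\sigma$ is not, and you hope the CE cone for $\pi_{\sigma_E}$ is full-dimensional and not contained in the hyperplane $\langle\rho^{\pi_{\sigma_E}}-\rho^{\pi_\sigma},f\rangle=0$. Neither of those claims is argued, and even if both held, an $f$ in the cone with nonzero value gap need not make $\pi_\sigma$ fail to be a CE: two distinct occupancies can be CEs under the same reward. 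Your candidate $f=\mathrm{sign}(\rho^{\pi_{\sigma_E}}-\rho^{\pi_\sigma})$ comes with no argument at all for either incentive condition. You correctly flag ``simultaneous control of the best-response terms'' as the crux, but you do not resolve it.

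The paper sidesteps this crux entirely by making the best-response value \emph{policy-independent}. Fix any pair $(s',\action')$ and take the reward $f_{s',\action'}(s,\action)=-\mathbf{1}\{s=s',\action=\action'\}$. This reward is nonpositive, so no deviator can get more than $0$; and with a complete deviation class, any single agent $i$ can guarantee exactly $0$ by mapping every recommendation at $s'$ to some $a_i'\neq a_i'$ (the $i$-th coordinate of $\action'$), so the joint action at $s'$ is never $\action'$. Hence $\max_{i,\phi_i}J_i(\pi_{\sigma,\phi_i},f_{s',\action'})=0$ regardless of $\sigma$, and
\[
\regret(\sigma,f_{s',\action'})=0-J(\pi_\sigma,f_{s',\action'})=H\,\rho^{\pi_\sigma}(s',\action').
\]
Regret equivalence then gives $\rho^{\pi_\sigma}(s',\action')\le\rho^{\pi_{\sigma_E}}(s',\action')$ for every $(s',\action')$, and since both sides are probability measures this forces equality. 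The trick is that the negative-indicator reward turns regret into a direct readout of the occupancy at a single cell, so there is nothing to ``simultaneously control.''
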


Next, we prove that completeness of the classes is necessary for this implication to hold true.

\begin{theorem}[Incomplete Classes]\label{thm:incomplete-class}
    There exists an MG, an expert policy $\sigma_E$, and a trained policy $\sigma$ such that even though the regret equivalence is satisfied under the true reward function $r$, i.e. $\regret(\sigma, r)-\regret(\sigma_E, r)=0$, the value gap $\max_{i\in[m]}(J_i(\pi_{\sigma_E}, r)-J_i(\pi_{\sigma}, r))\neq0$. \hyperref[pf:thm:incomplete-class]{[Proof]}
\end{theorem}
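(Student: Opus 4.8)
The plan is to exhibit a small coordination-style Markov Game in which two mediator policies recommend two different pure joint actions, both of which form correlated equilibria under the true reward $r$ (hence both have regret $0$), yet the two joint actions yield different payoffs for the agents. Concretely, take $H=1$, a single state $s$ (so $\cT$ and $\rho_0$ are trivial), $m=2$ agents, and binary action sets $\cA_1=\cA_2=\{a,b\}$. Define a symmetric payoff table with $r_i(a,a)=1$ and $r_i(b,b)=0$ for both $i$, and set every "off-diagonal" payoff (the payoff under a unilateral switch away from a diagonal profile) small enough that no agent can profit by unilaterally deviating from either diagonal profile --- e.g. $r_i(a,b)=r_i(b,a)=-1$, which lies in $[-1,1]$ as required. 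Let $\sigma_E$ be the mediator that deterministically recommends $(a,a)$ and let $\sigma$ be the mediator that deterministically recommends $(b,b)$.

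First I would compute the regrets. Since each mediator's recommendation to each agent is deterministic, a deviation $\phi_i$ only matters through its value on the single recommended action, so enumerating the (at most two) relevant cases suffices: under $\sigma_E$ the best response of each agent to the other agent playing $a$ is to play $a$, and under $\sigma$ the best response of each agent to the other playing $b$ is to play $b$. Hence $\regret(\sigma_E,r)=\regret(\sigma,r)=0$, so the regret gap is exactly $0$ --- and this conclusion holds even if the deviation class $\Phi$ is taken to be complete, so the only "incompleteness" actually exploited here is that we fix a single true reward $r$ rather than quantifying over a complete reward class (cf. \cref{thm:complete-class}). Next I would compute the values: $J_i(\pi_{\sigma_E},r)=r_i(a,a)=1$ for both $i$, while $J_i(\pi_{\sigma},r)=r_i(b,b)=0$, so $\max_{i\in[m]}(J_i(\pi_{\sigma_E},r)-J_i(\pi_{\sigma},r))=1\neq 0$, completing the construction.

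The example is deliberately minimal, and the only point requiring any care is verifying that \emph{every} deviation in the class (including the complete class, if one wants the strongest statement) is unprofitable under \emph{both} policies --- which is exactly what the choice of dominated off-diagonal payoffs guarantees; there is no genuine analytic obstacle. The conceptual content, which is the real reason such a construction exists, is that regret is a \emph{relative} quantity --- the gain of the best deviation over obedience --- so two policies can tie on regret (here, both at $0$) while differing in absolute value, because both the "obedient" baseline and the counterfactual best deviation shift with $\sigma$ (agent $i$'s best deviation is evaluated against the \emph{other} agents following $\sigma$, whose behavior changes between $\sigma$ and $\sigma_E$). This phenomenon genuinely requires $m\ge 2$: with a single agent and a complete deviation class the best-deviation value is the fixed optimal value $V^{\star}$ independent of $\sigma$, so the regret gap collapses to the value gap, consistent with the single-agent equivalence noted earlier.
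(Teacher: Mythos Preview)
Your proposal is correct and takes essentially the same approach as the paper: both construct a one-shot two-player coordination game with two pure correlated equilibria of different payoffs, letting $\sigma_E$ and $\sigma$ recommend one each so that both regrets are zero while the values differ. The only differences are cosmetic (your payoff numbers versus the paper's), and your construction has the minor advantage of keeping all rewards in $[-1,1]$ as required by the preliminaries.
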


Together, these results tell us that with an expressive enough class of reward functions / deviations, regret equivalence is stronger than value equivalence. We now turn our attention to the converse.

\subsection{Value Equivalence \texorpdfstring{$\protect\notimplies$}{Does Not Imply} Regret Equivalence}\label{subsec:gap}
We now show a surprising result: \textit{value equivalence does not directly imply a low regret gap!} In the worst case, value equivalence fails to provide \textit{any} meaningful guarantees on the regret gap. This reveals a critical distinction between SAIL and MAIL not fully addressed in the prior work.

\begin{theorem}\label{thm:regret-hard}
    There exists a Markov Game, an expert policy $\sigma_E$, and a learner policy $\sigma$, such that even occupancy measure of $\pi_\sigma$ exactly matches $\pi_{\sigma_E}$, i.e. $\forall (s,\action),\rho^{\pi_\sigma}(s,\action)=\rho^{\pi_{\sigma_E}}(s,\action)$ (i.e. we have value equivalence under all rewards), the regret gap $\regret(\sigma)-\regret(\sigma_E) \geq \Omega(H)$. \hyperref[pf:thm:regret-hard]{[Proof]}
\end{theorem}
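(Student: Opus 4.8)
The goal is to exhibit a Markov Game where the learner $\sigma$ perfectly reproduces the expert's occupancy measure (hence all per-agent values coincide, $J_i(\pi_\sigma) = J_i(\pi_{\sigma_E})$ for every reward), yet $\regret(\sigma) - \regret(\sigma_E) = \Omega(H)$. The plan is to make the two mediator policies differ only in how they \emph{correlate} their recommendations across agents (or across time), since the occupancy measure $\rho^{\pi_\sigma}$ only records marginal state-action visitation and is blind to the joint recommendation structure that agents can exploit when deviating. Concretely, I would use a two-agent game where, on the equilibrium path, both mediators recommend the same distribution over joint actions at each state — so the induced occupancy measures are identical — but $\sigma_E$ uses a \emph{correlated} recommendation (one that forms a CE of some underlying payoff structure) while $\sigma$ uses the \emph{product} of the same marginals. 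The key is that a deviating agent $i$ observes only its own recommendation $a_i$ and best-responds given the posterior over $a_{-i}$; under the correlated $\sigma_E$ this posterior is ``safe'' (no profitable deviation), while under the decorrelated $\sigma$ the agent can condition on $a_i$ to predict $a_{-i}$ poorly correlated and find a large profitable deviation.

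The steps, in order: (1) Construct a gadget MG with a simple ``matching'' reward at each of $\Theta(H)$ stages, where two agents each have two actions and agent $i$'s reward is $+1$ when the realized actions match a target pattern and negative otherwise. Design the payoffs so that the correlated recommendation distribution $\mu$ (e.g. the uniform mixture over the two ``coordinated'' joint actions) is a CE — every agent's best response to the conditional distribution over others' recommendations is to obey — while the product-of-marginals distribution $\mu_1 \otimes \mu_2$ leaves each agent with a deviation worth a constant per stage. (2) Build $\sigma_E$ to recommend $\mu$ at every stage and $\sigma$ to recommend $\mu_1\otimes\mu_2$ at every stage, and verify $\rho^{\pi_\sigma} = \rho^{\pi_{\sigma_E}}$: since the state transitions depend only on the realized joint action and both induce the same \emph{marginal-per-component but also same joint?}—here I must be careful, so instead I would route transitions so they depend only on the per-agent marginals (e.g. transitions ignore actions, or depend on a coordinate that has identical marginal law under $\mu$ and $\mu_1\otimes\mu_2$), forcing $d_h^{\pi_\sigma} = d_h^{\pi_{\sigma_E}}$ and hence equality of occupancy measures and of all $J_i$. (3) Lower-bound $\regret(\sigma)$: exhibit an explicit deviation $\phi_i$ for some agent that, at each of the $\Theta(H)$ stages, exploits the decorrelation to gain $\Omega(1)$, so $J_i(\pi_{\sigma,\phi_i}) - J_i(\pi_\sigma) = \Omega(H)$. (4) Upper-bound $\regret(\sigma_E)$: show $\mu$ being a (stagewise, and by a backward-induction argument over the whole horizon) CE implies $\regret(\sigma_E) = 0$, so the regret gap is $\Omega(H) - 0 = \Omega(H)$.

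The main obstacle is step (2) reconciled with step (3): I need the occupancy measures to be \emph{exactly} equal while the regrets differ by $\Omega(H)$, which means the difference between $\mu$ and $\mu_1\otimes\mu_2$ must be invisible to state visitation yet decisive for deviation incentives. The cleanest way to guarantee invisibility is to make transitions (and the on-path rewards' \emph{marginal} contributions) functions only of quantities whose law is preserved — e.g. have the next state depend only on agent $1$'s action, whose marginal is the same under $\mu$ and $\mu_1\otimes\mu_2$ — while the deviation gain depends on the \emph{conditional} law of agent $2$'s recommendation given agent $1$'s, which is exactly where the two distributions diverge. A secondary subtlety is ensuring the per-stage constant deviation gains actually \emph{sum} to $\Omega(H)$ rather than being washed out by downstream dynamics; I would handle this by making the gadget's stages effectively independent (a layered/product structure, or absorbing the action's effect so it never changes future reachable states), so regret decomposes additively across the $\Theta(H)$ stages. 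Finally I should double-check the recoverability/coverage constants are benign (constant) so the $\Omega(H)$ bound is genuinely about the horizon and not hiding a blowup in other parameters.
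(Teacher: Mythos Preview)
Your proposal rests on a misreading of the occupancy measure. In this paper $\rho^\pi(s,\vec a)$ is indexed by the \emph{joint} action $\vec a\in\cA=\cA_1\times\cdots\times\cA_m$, not by per-agent marginals. Consequently, if at a visited state $s$ the expert plays a correlated distribution $\mu$ while the learner plays the product $\mu_1\otimes\mu_2$ with $\mu\neq\mu_1\otimes\mu_2$, then already $\rho^{\pi_{\sigma_E}}(s,\vec a)=d^{\pi_{\sigma_E}}(s)\,\mu(\vec a)\neq d^{\pi_\sigma}(s)\,(\mu_1\otimes\mu_2)(\vec a)=\rho^{\pi_\sigma}(s,\vec a)$, regardless of how you route transitions. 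Your fix in step (2)---making transitions depend only on per-agent marginals---equalizes the state distributions $d^\pi$ but cannot equalize the occupancy measures, because those record the realized joint action as well. So the construction violates the theorem's hypothesis $\rho^{\pi_\sigma}=\rho^{\pi_{\sigma_E}}$ at every stage where the decorrelation trick is deployed.

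The paper's construction avoids this by making $\sigma_E$ and $\sigma$ \emph{identical on every state that is reached under obedience}, and letting them differ only at a state $s_1$ that has $d^{\pi_{\sigma_E}}(s_1)=d^{\pi_\sigma}(s_1)=0$. At such an off-path state the occupancy measure is zero for both policies no matter what they recommend there, so exact occupancy matching holds trivially. The state $s_1$ becomes reachable only when an agent deviates; the expert's recommendation at $s_1$ is chosen so that no further gain is possible (making $\sigma_E$ a CE, hence $\regret(\sigma_E)=0$), while the learner's recommendation at $s_1$ is exploitable, allowing the deviator to lock in $\Theta(H)$ reward along an upper chain of states. The essential idea you are missing is that the freedom to differ must live \emph{off the obedient path}; on-path correlation differences are always detected by $\rho^\pi(s,\vec a)$.
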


We leave the details of the proof for this theorem in \cref{app:subsec:regret-hard}. As visualized in \cref{fig:mg1}, both the expert and learner policies only visit the states in the lower path $s_2,s_4,...,s_{2H-2}$. The trained policy perfectly matches the occupancy measure of the expert by taking identical actions in visited states $s_2,s_4,...,s_{2H-2}$. However, expert demonstrations lack coverage of state $s_1$ as it is unreachable by executing $\pi_E$. This omission becomes critical when agent 1 deviates from the original policy, making $s_1$ unreachable with high probability.  Consequently, the trained policy may perform poorly in $s_1$, in stark contrast to the expert playing a CE under the true reward function. This example highlights the key difference between value equivalence and regret equivalence: the former only depends on states actually visited by the policy, while the latter depends on the counterfactual recommendations the learner would make at unvisited states in response to an agent deviations. 
\begin{remark}\label{rem:regret-hard}
As shown in \cref{thm:regret-hard}, even if the learner has access to infinite samples on the equilibrium path from expert demonstrations, it is possible that the learner remains unaware of the expert's behavior in states unvisited by the expert (but reachable by the deviated agents joint policy). Thus, from an information theoretic perspective, it is impossible for the learner to minimize the regret gap without knowing how the expert would behave on those states. This demonstrates the fundamental difficulty of minimizing the regret gap, and thus, \textbf{\textit{regret is `hard' in MAIL.}} We therefore need a fundamentally new paradigm of MAIL algorithm to minimize the regret gap.
\end{remark}
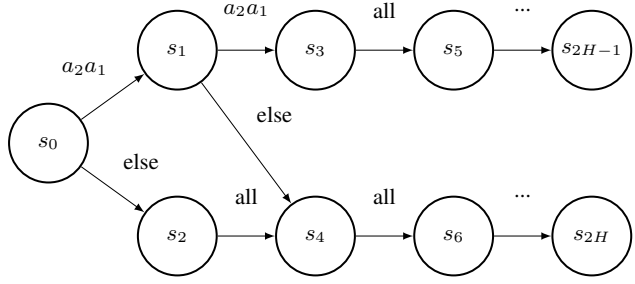
\begin{figure}
    \centering
    \resizebox{0.6\linewidth}{!}{
    \begin{tikzpicture}[auto,node distance=8mm,>=latex,font=\small]
    \tikzset{every node/.style={circle, minimum size=1.1cm, inner sep=2pt}}
    
    \tikzstyle{round}=[thick,draw=black,circle]

    \node[round] (s0) {$s_0$};
    \node[round,above right=5mm and 10mm of s0] (s1) {$s_1$};
    \node[round,below right=5mm and 10mm of s0] (s2) {$s_2$};
    \node[round,right=8mm of s1] (s3) {$s_3$};
    \node[round,right=8mm of s2] (s4) {$s_4$};
    \node[round,right=8mm of s3] (s5) {$s_5$};
    \node[round,right=8mm of s4] (s6) {$s_6$};
    \node[round,right=8mm of s5] (s7) {$s_{{2H-1}}$};
    \node[round,right=8mm of s6] (s8) {$s_{2H}$};

    \draw[->] (s0) -- (s1) node[midway, auto] {$a_2a_1$};
    \draw[->] (s0) -- (s2) node[midway, auto] {else};
    \draw[->] (s2) -- (s4) node[midway, auto] {all};
    \draw[->] (s1) -- (s4) node[midway, auto] {else};
    \draw[->] (s1) -- (s3) node[midway, auto] {$a_2a_1$};
    \draw[->] (s3) -- (s5) node[midway, auto] {all};
    \draw[->] (s4) -- (s6) node[midway, auto] {all};
    \draw[->] (s5) -- (s7) node[midway, auto] {...};
    \draw[->] (s6) -- (s8) node[midway, auto] {...};

\end{tikzpicture}}
    \caption{Illustration of an Markov Game that captures why \textit{``regret is hard''}. Here, $\sigma_E(a_1a_1|s_0)=1$. Observe that $s_1$ is un-visited when all agents obediently follow $\sigma_E$ but is with probability $1$ under deviation $\phi_1$ ($\phi_1(s_0,a_1)=\phi_1(s_1,a_1)=a_2$). This means that unless we know what the expert $\sigma_E$ would have recommended counter-factually in $s_1$, we cannot minimize the regret gap.}
    \label{fig:mg1}
\end{figure}

\subsection{Low Regret Gap \texorpdfstring{$\protect\implies$}{Implies} CE, Low Value Gap \texorpdfstring{$\protect\notimplies$}{Does Not Imply} CE }
Given the deep connections between regret and correlated equilibrium discussed above, it is perhaps intuitive that if the expert $\sigma_E$ is playing a CE, a low regret gap means the learner is as well.
\begin{theorem}[Regret Gap Implies CE]\label{thm:CE}
    If the expert policy $\sigma_E$ induces a $\delta_1$-approximate CE, and the learner policy $\sigma$ satisfies $\regret(\sigma)-\regret(\sigma_E)\leq\delta_2$, then $\sigma$ induces a $\delta_1+\delta_2$-approximate CE. \hyperref[pf:thm:CE]{[Proof]}
\end{theorem}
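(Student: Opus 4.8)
The plan is to obtain the bound on $\regret(\sigma)$ by directly adding the two hypotheses, so this is essentially a one-line argument once the definitions are unwound. By the definition of an $\epsilon$-approximate CE, the assumption that $\sigma_E$ induces a $\delta_1$-approximate CE is exactly the statement $\regret(\sigma_E) \leq \delta_1$. The second hypothesis is $\regret(\sigma) - \regret(\sigma_E) \leq \delta_2$, where the left-hand side is precisely the regret gap of \eqref{eq:regret-gap}.

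First I would write the trivial decomposition
\begin{equation}
\regret(\sigma) = \bigl(\regret(\sigma) - \regret(\sigma_E)\bigr) + \regret(\sigma_E).
\nonumber
\end{equation}
Then I would bound the first summand by $\delta_2$ using the regret-gap hypothesis and the second summand by $\delta_1$ using the CE hypothesis, yielding $\regret(\sigma) \leq \delta_1 + \delta_2$. By the definition of an approximate CE, this says exactly that $\sigma$ induces a $(\delta_1+\delta_2)$-approximate CE, which is the claim.

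The one point worth noting explicitly (rather than a genuine obstacle) is that the regret gap in \eqref{eq:regret-gap} is literally the difference $\regret(\sigma) - \regret(\sigma_E)$ of the two scalar regret values — there is no absolute value and no separate inner maximization that would force a triangle-inequality-style argument — so the additive decomposition above is exact and the result follows immediately with no slack. I therefore do not anticipate any substantive difficulty; the content of the theorem is simply making the connection between the regret gap and the approximate-CE notion precise.
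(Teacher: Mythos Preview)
The proposal is correct and is essentially identical to the paper's own proof: both unwind the definition of approximate CE to get $\regret(\sigma_E)\leq\delta_1$, write the trivial decomposition $\regret(\sigma)=(\regret(\sigma)-\regret(\sigma_E))+\regret(\sigma_E)$, and add the two bounds.
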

Then, by combining our preceding result with \cref{thm:regret-hard}, it follows that a low value gap does not imply that the learner is playing a CE.
\begin{corollary}
    There exists a Markov Game, an expert policy $\sigma_E$, and a learner policy $\sigma$, such that $\sigma_E$ induces a $\delta_1$-approximate CE, and $\sigma$ satisfies $\max_{i\in[m]}(J_i(\pi_{\sigma_E})-J_i(\pi_{\sigma}))=\delta_2$,  $\sigma$ induces a $\Omega(H)$-approximate CE.
\end{corollary}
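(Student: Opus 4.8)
The plan is to re-use, verbatim, the Markov Game, expert policy $\sigma_E$, and learner policy $\sigma$ built in the proof of \cref{thm:regret-hard} (the instance depicted in \cref{fig:mg1}) and to simply read off the three quantities $\delta_1 = \regret(\sigma_E)$, $\delta_2 = \max_{i\in[m]}(J_i(\pi_{\sigma_E}) - J_i(\pi_\sigma))$, and $\regret(\sigma)$ from that construction. No new machinery is needed; the corollary is essentially the observation that the ``converse'' of \cref{thm:CE} (value gap small $\Rightarrow$ learner is a good CE) fails.

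First I would record that in the \cref{thm:regret-hard} instance the occupancy measures of $\pi_\sigma$ and $\pi_{\sigma_E}$ coincide exactly, so $J_i(\pi_{\sigma_E}) = J_i(\pi_\sigma)$ for every agent $i$ under the true reward $r$; hence the value gap is $\delta_2 = 0$ (any $\delta_2 \ge 0$ works, but $0$ makes the cleanest statement). Second, I would verify the claim asserted in the discussion following \cref{thm:regret-hard} that $\sigma_E$ itself induces a correlated equilibrium under $r$, i.e. $\regret(\sigma_E) = \delta_1$ with $\delta_1 = 0$ (or, if one prefers, an arbitrarily small constant): concretely this amounts to checking that no agent has a profitable deviation from $\sigma_E$'s recommendations in the instance — the only deviation that alters the reached states is agent $1$ defecting at $s_0$ (and at $s_1$) toward the upper path, and the rewards along that path are chosen in the construction precisely so that this is not beneficial for agent $1$, with the remaining deviations being handled symmetrically.

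Third, I would invoke \cref{thm:regret-hard}, which gives $\regret(\sigma) - \regret(\sigma_E) \ge \Omega(H)$, and combine it with $\regret(\sigma_E) = \delta_1$ from the previous step:
\[
\regret(\sigma) \;=\; \big(\regret(\sigma) - \regret(\sigma_E)\big) + \regret(\sigma_E) \;\ge\; \Omega(H) + \delta_1 \;=\; \Omega(H).
\]
By the definition of an $\epsilon$-approximate CE, this says exactly that $\sigma$ induces only an $\Omega(H)$-approximate CE, which together with $\delta_2 = 0$ and $\delta_1 = 0$ establishes the corollary.

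There is essentially no difficult step: all the content is inherited from \cref{thm:regret-hard}. The one point needing a little care is ensuring the underlying instance can be taken so that $\regret(\sigma_E)$ is $o(H)$ (ideally exactly $0$), so that the $\Omega(H)$ lower bound on the \emph{regret gap} is not absorbed into a large $\regret(\sigma_E)$ term; this is the sole place where the argument inspects the details of the construction rather than merely its statement, and it is handled by the explicit CE check in the second step.
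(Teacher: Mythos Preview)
Your approach is correct and is exactly the one the paper intends: reuse the instance from \cref{thm:regret-hard}, note that there $\delta_2=0$ (exact occupancy matching) and $\regret(\sigma_E)=0$ (so $\delta_1=0$), and conclude $\regret(\sigma)\ge\Omega(H)$.

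One small correction to your justification of the CE check: in the \cref{thm:regret-hard} instance the rewards along the upper path are \emph{positive} ($r(s_3)=\cdots=r(s_{2H-3})=1$), so it is not that ``the rewards are chosen so that deviation is not beneficial for agent~1.'' Rather, the reason $\sigma_E$ is a CE is that a unilateral deviation by agent~1 cannot reach those rewarding states: at $s_1$ the expert recommends $a_3a_3$, so agent~2 plays $a_3$, and therefore the joint action at $s_1$ is never $a_2a_1$ regardless of agent~1's choice --- the ``else'' branch sends play back to the zero-reward lower path. This is precisely the mechanism the construction exploits (the \emph{learner} fails here because $\sigma(a_1a_1\mid s_1)=1$, so agent~2 plays $a_1$ and agent~1's deviation \emph{does} unlock the upper path). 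Your argument goes through unchanged once this is the stated reason.
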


Together, these results imply that if we are interested in inducing a CE amongst the agents in the population, the regret gap is a more suitable objective.

\subsection{Efficient Algorithms for Minimizing the Value Gap}\label{sec:value-easy}
Although we have shown that the value gap is a `weaker' objective in some sense, in many real-world scenarios, the agents may be non-strategic. In these scenarios, minimizing value gap can be a reasonable learning objective. As we will demonstrate here, the natural multi-agent generalization of single-agent IL algorithms can efficiently minimize the value gap—hence, \textbf{\textit{value is `easy' in MAIL.}}

Behavior Cloning (BC) and Inverse Reinforcement Learning (IRL) are two single-agent IL algorithms aimed at minimizing the value gap. By running these algorithms over joint policies, we can apply BC and IRL to the multi-agent setting, which we call Joint Behavior Cloning (J-BC) and Joint Inverse Reinforcement Learning (J-IRL). Doing so results in the same value gap bounds as in the single-agent setting. More details on of J-BC and J-IRL can be found in \cref{sec:extend-single-agent}.
\begin{theorem}[J-BC Value Gap Upper Bound]\label{thm:bc-value}
    If J-BC returns a policy $\sigma$ that satisfies $\bE_{s\sim d^{\pi_{\sigma_E}}}[\ell(\sigma_E(s),\sigma(s))]\leq\epsilon$, then the value gap $\max_{i\in[m]}(J_i(\pi_{\sigma_E})-J_i(\pi_{\sigma}))\leq O(\epsilon H^2)$. \hyperref[pf:thm:bc-value]{[Proof]}
\end{theorem}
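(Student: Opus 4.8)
The plan is to mimic the classic single-agent behavior cloning analysis, lifting it to joint policies. Recall that J-BC simply treats the mediator's joint recommendation policy $\sigma$ as a single ``super-agent'' policy and runs ordinary BC against the expert's joint policy $\sigma_E$, so $\pi_\sigma$ and $\pi_{\sigma_E}$ are just the joint policies obtained by obediently following the respective recommendations. The hypothesis gives us a bound on the expected loss under the \emph{expert's} induced state distribution, $\bE_{s\sim d^{\pi_{\sigma_E}}}[\ell(\sigma_E(s),\sigma(s))]\le\epsilon$, and since $\ell$ upper-bounds $\TV$, this controls $\bE_{s\sim d^{\pi_{\sigma_E}}}[\TV(\sigma_E(s),\sigma(s))]\le\epsilon$ as well.

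First I would fix an agent $i\in[m]$ and bound $J_i(\pi_{\sigma_E}) - J_i(\pi_\sigma)$. The natural tool is the standard ``covariate-shift'' lemma for behavior cloning: the difference in state-visitation distributions between two policies that differ by at most $\epsilon$ in TV at each state (measured along one of them) grows linearly in the horizon, giving $\sum_{h=1}^H \TV(d_h^{\pi_{\sigma_E}}, d_h^{\pi_\sigma}) \le O(\epsilon H^2)$ after summing the per-step error accumulation (error at step $h$ is $O(\epsilon h)$, and summing over $h$ gives $O(\epsilon H^2)$; equivalently $\|d^{\pi_{\sigma_E}} - d^{\pi_\sigma}\|_1 \le O(\epsilon H)$). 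Since $J_i(\pi) = H\sum_{s,\action}\rho^\pi(s,\action) r_i(s,\action)$ with $r_i$ bounded in $[-1,1]$, the value gap is at most $H$ times the $\ell_1$ distance between occupancy measures, which is itself bounded by the visitation-distribution drift plus the per-state action-distribution error. Carefully, $|J_i(\pi_{\sigma_E}) - J_i(\pi_\sigma)| \le H\|\rho^{\pi_{\sigma_E}} - \rho^{\pi_\sigma}\|_1 \le H\big(\|d^{\pi_{\sigma_E}} - d^{\pi_\sigma}\|_1 + \bE_{s\sim d^{\pi_{\sigma_E}}}[\TV(\sigma_E(s),\sigma(s))]\big)$, and both terms on the right are $O(\epsilon H)$, yielding $O(\epsilon H^2)$. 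Taking the max over $i$ preserves the bound since it is uniform in $i$.

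The one subtlety — and the only place that needs care rather than being routine — is establishing the $O(\epsilon H)$ bound on $\|d^{\pi_{\sigma_E}}-d^{\pi_\sigma}\|_1$ from the per-state guarantee, since the hypothesis only controls the action-distribution mismatch \emph{in expectation over $d^{\pi_{\sigma_E}}$}, not uniformly over all states. The standard argument handles this by an induction on $h$: letting $\delta_h := \TV(d_h^{\pi_{\sigma_E}}, d_h^{\pi_\sigma})$, one shows $\delta_{h+1}\le \delta_h + \bE_{s\sim d_h^{\pi_{\sigma_E}}}[\TV(\sigma_E(s),\sigma(s))]$, where the new mismatch term is measured against the expert distribution precisely because we peel off one step at a time along the expert's trajectory; summing the per-step terms over all $h$ recovers $\bE_{s\sim d^{\pi_{\sigma_E}}}[\TV(\sigma_E(s),\sigma(s))]\cdot H \le \epsilon H$ for the cumulative action error, and accumulating over the horizon gives the extra $H$ factor. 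I expect this induction to be the main (though still standard) technical step; everything else is bookkeeping with the identity $J_i(\pi)=H\langle \rho^\pi, r_i\rangle$ and the triangle inequality. I would then remark that this matches the known single-agent BC rate, since J-BC is literally single-agent BC on the joint policy space.
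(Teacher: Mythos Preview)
Your proposal is correct and takes essentially the same approach as the paper: the paper's proof simply observes that for each fixed $i$ the problem is a single-agent MDP over the joint action space with reward $r_i$, cites the standard BC analysis of \citet{ross2010efficient,swamy2021moments} to get $J_i(\pi_{\sigma_E})-J_i(\pi_\sigma)\le O(\epsilon H^2)$, and then takes the max over $i$. Your write-up just unpacks what that cited analysis actually is (the induction on $h$ bounding $\TV(d_h^{\pi_{\sigma_E}},d_h^{\pi_\sigma})$ and the occupancy-measure triangle inequality), which is exactly the intended argument.
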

\begin{theorem}[J-IRL Value Gap Upper Bound]\label{thm:irl-value}
    If J-IRL outputs a policy $\sigma$ with moment-matching error $$\sup_{f\in\cF}\bE_{\pi_{\sigma_E}}\left[\sum_{h=1}^{H}f(s_h,\action_h)\right]-\bE_{\pi_\sigma}\left[\sum_{h=1}^{H}f(s_h,\action_h)\right]\leq\epsilon H,$$ then the value gap $\max_{i\in[m]}(J_i(\pi_{\sigma_E})-J_i(\pi_{\sigma}))\leq O(\epsilon H)$. \hyperref[pf:thm:irl]{[Proof]}
\end{theorem}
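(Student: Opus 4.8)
The plan is to leverage the fact that performance is linear in the occupancy measure, $J_i(\pi) = H\sum_{s,\action}\rho^\pi(s,\action)r_i(s,\action)$, so that the value gap for agent $i$ is exactly an inner product against the difference of occupancy measures. First I would observe that since the reward class $\cF$ is assumed to contain each agent's true reward $r_i$ (as $r_i : \cS \times \cA \to [-1,1]$ lies in the complete-style class over which J-IRL performs moment matching), the supremum in the moment-matching guarantee dominates the quantity we care about. Concretely, for any fixed agent $i$,
\begin{equation}
J_i(\pi_{\sigma_E}) - J_i(\pi_\sigma) = \bE_{\pi_{\sigma_E}}\left[\sum_{h=1}^H r_i(s_h,\action_h)\right] - \bE_{\pi_\sigma}\left[\sum_{h=1}^H r_i(s_h,\action_h)\right] \le \sup_{f\in\cF}\left(\bE_{\pi_{\sigma_E}}\left[\sum_{h=1}^H f(s_h,\action_h)\right] - \bE_{\pi_\sigma}\left[\sum_{h=1}^H f(s_h,\action_h)\right]\right) \le \epsilon H.
\end{equation}
Taking the maximum over $i \in [m]$ on the left-hand side changes nothing on the right, giving $\max_{i\in[m]}(J_i(\pi_{\sigma_E}) - J_i(\pi_\sigma)) \le \epsilon H = O(\epsilon H)$.

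The only subtlety — and the step I'd be most careful about — is making sure the reward scaling and the definition of $J_i$ line up with the moment-matching functional as written. The statement of the theorem writes the moment-matching error as $\bE_{\pi_{\sigma_E}}[\sum_h f(s_h,\action_h)] - \bE_{\pi_\sigma}[\sum_h f(s_h,\action_h)] \le \epsilon H$, which already sums over the horizon, so it matches $J_i$ directly (recall $J_i(\pi) = \bE_{\pi}[\sum_{t=1}^H r_i(s_t,\action_t)]$) without any extra factor of $H$; the $H$ inside $J_i(\pi) = H\sum_{s,\action}\rho^\pi(s,\action) r_i(s,\action)$ is already absorbed because $\rho^\pi$ is the \emph{average} occupancy over $H$ steps. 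So no compounding-error argument is needed here at all — unlike J-BC, where the $O(\epsilon H^2)$ bound comes from a performance-difference / distribution-shift argument, J-IRL directly controls the population-level moment-matching objective, and the reduction is essentially immediate once the right reward function is plugged in. I would close by noting this matches the known single-agent IRL rate, as claimed in the surrounding text.
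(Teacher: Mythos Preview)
Your proposal is correct and matches the paper's own proof essentially line for line: for each $i$, observe $r_i\in\cF$ so $J_i(\pi_{\sigma_E})-J_i(\pi_\sigma)\le\sup_{f\in\cF}\bigl(\bE_{\pi_{\sigma_E}}[\sum_h f]-\bE_{\pi_\sigma}[\sum_h f]\bigr)\le\epsilon H$, then take the max over $i$. The additional remarks about scaling and the absence of a compounding-error step are accurate and consistent with the paper's presentation.
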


As argued by \citet{swamy2021moments}, satisfying the conditions for either of the above theorems can be achieved oracle-efficiently via a reduction to no-regret online learning. We now turn our attention to sufficient conditions for there to exist efficient algorithms for minimizing the regret gap.

\section{Efficient Algorithms for Minimizing the Regret Gap}\label{sec:coverage}
In our following analysis, we will make a \textit{recoverability} assumption: that a single-step agents deviation could at most cost the expert a fixed constant.
\begin{assumption}[$u$-recoverability]\label{assump:recoverability} We say that an MG is $u$-recoverable if the expert advantage function is bounded for all deviations, i.e. 
    $ \forall s,\action,h,i,\phi_i, \left|A_{i,h}^{{\pi}_{\sigma_E,\phi_i}}(s,\action)\right|\leq u$.
\end{assumption}
Intuitively, a small value of $u$ means that we're not in a problem where a single agent can deviate and a ``car crash'' (i.e. a joint mistake) happens that even the expert couldn't recover from for the rest of the episode. In the worst case, $u$ is $O(H)$. This assumption can be thought of natural multi-agent generalization of the standard recoverability assumption in SAIL \citep{ross2011reduction,swamy2021moments,spencer2021feedback} which is necessary and sufficient to avoid compounding errors while maintaining computational efficiency. While we define recoverability with respect to the actual reward function for implicitly, one can instead easily define it with respect to the worst-case reward function in a class ($\sup_{f \in \mathcal{F}}$) -- \textit{moment} recoverability -- as in \citep{swamy2021moments} to avoid the need to know the ground truth set of agent reward functions $r$ to bound $u$.

In \cref{subsec:gap}, we proved that for general MGs, J-BC and J-IRL don't give any guarantees on the regret gap. 
Fundamentally, without the ability to observe how the expert would have responded in the counter-factual state induced by a deviation, the learner cannot ensure that they match the expert's regret. We now explore two different sets of assumptions that give us this ability.

\subsection{Assumption 1: Full Coverage of Expert Demonstrations}

In this section, we introduce a coverage assumption on the expert's state distribution $d^{\pi_{\sigma_E}}(s)$ which states that the expert visits every state with a positive probability. We will show that this assumption is sufficient to give a regret gap guarantee. The state coverage assumption is a common theoretical assumption in the analysis of learning in MDPs/MGs \citep{erez2023regret} and has been explored in SAIL \citep{spencer2021feedback}.

\begin{assumption}[$\beta$-coverage]\label{assump:coverage}
 There exists a constant $\beta>0$ such that for the expert's policy $\sigma_E$, it holds that $d^{\pi_{\sigma_E}}(s)\geq\beta$ for all $s$. 
\end{assumption}

Intuitively, this assumption implies that in the infinite sample limit, there are no states where we are unsure what the expert would recommend. As discussed in \cref{rem:regret-hard}, without the ability to interactively query the expert, a coverage assumption is necessary because we cannot minimize the regret gap without knowing the expert mediator's actions in counter-factual states.

We first show that under \cref{assump:coverage}, J-BC and J-IRL get a (relatively weak) regret gap guarantee.

\subsubsection{Regret Gaps of J-BC and J-IRL under Full Demonstration Coverage}\label{subsec:BC-IRL}
We begin by analyzing joint behavioral cloning (J-BC).
\begin{theorem}[J-BC Regret Gap Upper Bound]\label{thm:BC-coverage}
    Under \cref{assump:recoverability} and \cref{assump:coverage}, if the J-BC algorithm returns a policy $\sigma$ that satisfies $\bE_{s\sim d^{\pi_{\sigma_E}}}[\ell(\sigma_E(s),\sigma(s))]\leq\epsilon$, then $$\regret(\sigma)-\regret(\sigma_E)\leq O\left(\frac{1}{\beta}\epsilon u H\right).$$ \hyperref[pf:thm:bc-coverage]{[Proof]}
\end{theorem}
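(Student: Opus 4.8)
The plan is to reduce the regret gap to a sum of two single-agent-style value gaps — one along the obedient path and one along the deviated path — and bound each with the performance difference lemma, using $u$-recoverability to control advantages and $\beta$-coverage to transfer from a learner-induced state distribution to the expert's. First I would fix the agent $i^\star$ and deviation $\phi^\star \in \Phi_{i^\star}$ that attain $\regret(\sigma)$. Since $\regret(\sigma_E)$ is itself a maximum over agents and deviations, it dominates the particular choice $(i^\star,\phi^\star)$, so $\regret(\sigma_E) \ge J_{i^\star}(\pi_{\sigma_E,\phi^\star}) - J_{i^\star}(\pi_{\sigma_E})$, and hence
\[
\regret(\sigma) - \regret(\sigma_E) \;\le\; \bigl(J_{i^\star}(\pi_{\sigma,\phi^\star}) - J_{i^\star}(\pi_{\sigma_E,\phi^\star})\bigr) + \bigl(J_{i^\star}(\pi_{\sigma_E}) - J_{i^\star}(\pi_{\sigma})\bigr),
\]
which is exactly Terms I and III of the decomposition in \cref{sec:intro} (Term II cancels). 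It therefore suffices to bound each of the two bracketed quantities by $O(\tfrac1\beta\,\epsilon u H)$.

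For Term III, I would apply the performance difference lemma to $\pi_\sigma$ against $\pi_{\sigma_E}$. Since the obedient joint policy equals the recommendation distribution ($\pi_\sigma(\cdot\mid s)=\sigma(\cdot\mid s)$, and likewise for $\sigma_E$) and the advantage averaged over its own action distribution vanishes, $\bE_{\action\sim\sigma_E(\cdot\mid s)}[A_{i^\star,h}^{\pi_{\sigma_E}}(s,\action)]=0$, each per-timestep term equals $\bE_{\action\sim\sigma(\cdot\mid s)}[A_{i^\star,h}^{\pi_{\sigma_E}}(s,\action)] - \bE_{\action\sim\sigma_E(\cdot\mid s)}[A_{i^\star,h}^{\pi_{\sigma_E}}(s,\action)]$, which is $O\bigl(u\,\ell_{\TV}(\sigma(s),\sigma_E(s))\bigr)$ by $u$-recoverability (\cref{assump:recoverability} with the identity deviation, which lies in every $\Phi_i$). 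Summing over the $H$ timesteps gives $O\bigl(uH\,\bE_{s\sim d^{\pi_\sigma}}[\ell_{\TV}(\sigma(s),\sigma_E(s))]\bigr)$. Because $d^{\pi_\sigma}(s)\le 1 \le \tfrac1\beta d^{\pi_{\sigma_E}}(s)$ by \cref{assump:coverage}, this is $O\bigl(\tfrac{uH}{\beta}\bE_{s\sim d^{\pi_{\sigma_E}}}[\ell_{\TV}(\sigma(s),\sigma_E(s))]\bigr) \le O(\tfrac{1}{\beta}\epsilon uH)$, using that $\ell$ upper-bounds $\TV$ together with the J-BC guarantee $\bE_{s\sim d^{\pi_{\sigma_E}}}[\ell(\sigma_E(s),\sigma(s))]\le\epsilon$.

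Term I, $J_{i^\star}(\pi_{\sigma,\phi^\star}) - J_{i^\star}(\pi_{\sigma_E,\phi^\star})$, is handled the same way with one extra ingredient: the two deviated joint policies $\pi_{\sigma,\phi^\star}(\cdot\mid s)$ and $\pi_{\sigma_E,\phi^\star}(\cdot\mid s)$ are the pushforwards of $\sigma(\cdot\mid s)$ and $\sigma_E(\cdot\mid s)$ through the same deterministic map $(a_{i^\star},a_{-i^\star})\mapsto(\phi^\star(s,a_{i^\star}),a_{-i^\star})$, so by the data-processing inequality for total variation $\ell_{\TV}(\pi_{\sigma,\phi^\star}(s),\pi_{\sigma_E,\phi^\star}(s))\le \ell_{\TV}(\sigma(s),\sigma_E(s))$. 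Applying the performance difference lemma to $\pi_{\sigma,\phi^\star}$ against $\pi_{\sigma_E,\phi^\star}$, centering the advantage with the reference policy $\pi_{\sigma_E,\phi^\star}$, invoking $|A_{i^\star,h}^{\pi_{\sigma_E,\phi^\star}}|\le u$ (this is precisely \cref{assump:recoverability}), and then using $d^{\pi_{\sigma,\phi^\star}}(s)\le\tfrac1\beta d^{\pi_{\sigma_E}}(s)$ once more, gives $|J_{i^\star}(\pi_{\sigma,\phi^\star}) - J_{i^\star}(\pi_{\sigma_E,\phi^\star})|\le O(\tfrac1\beta\epsilon uH)$. Adding the Term I and Term III bounds yields $\regret(\sigma)-\regret(\sigma_E)\le O(\tfrac1\beta\epsilon uH)$.

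I expect the main subtlety to be the two distribution-shift steps rather than any single calculation: the coverage transfer must be performed against the learner's \emph{deviated} occupancy $d^{\pi_{\sigma,\phi^\star}}$ (not $d^{\pi_{\sigma_E}}$ directly), and this crude $d^{\pi_{\sigma,\phi^\star}}(s)\le 1$ vs.\ $d^{\pi_{\sigma_E}}(s)\ge\beta$ comparison is exactly where the $1/\beta$ blow-up — shown unavoidable by the matching lower bound — enters. The second point of care is applying the advantage-centering identity with the correct reference policy in each term ($\pi_{\sigma_E}$ for III, $\pi_{\sigma_E,\phi^\star}$ for I) so that $u$-recoverability is the only bound needed on the advantages; the rest is routine bookkeeping with the performance difference lemma and the fact that $\ell\ge\ell_{\TV}$.
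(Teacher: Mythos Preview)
Your proposal is correct and follows essentially the same approach as the paper: decompose the regret gap into Terms I and III, bound each via the performance difference lemma together with $u$-recoverability (using the identity deviation for Term III and $\phi^\star$ for Term I), and transfer from the learner-induced state distribution to $d^{\pi_{\sigma_E}}$ via the crude ratio bound $d(s)\le 1\le \tfrac{1}{\beta}d^{\pi_{\sigma_E}}(s)$. The paper packages the PDL-plus-recoverability step as a separate lemma and writes the data-processing step $\TV(\pi_{\sigma,\phi}(s),\pi_{\sigma_E,\phi}(s))\le\TV(\sigma(s),\sigma_E(s))$ as a bare inequality rather than naming it, but the argument is otherwise identical.
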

We leave the proof in \cref{pf:thm:bc-coverage}. It is worth to note that although the dependency of $H$ is linear under our recoverability assumption, we still need to pay for the term $\frac{1}{\beta}$ in our regret gap bound. In general, this term can grow exponentially with the horizon, making this guarantee relatively weak. We can show its tightness by slightly modifying the example in \cref{thm:regret-hard} to satisfy the assumptions.

\begin{theorem}[J-BC Regret Gap Lower Bound]\label{thm:lower-bound-coverage}
    There exists a Markov Game, an expert policy $\sigma_E$, and learner policy $\sigma$ such that $\sigma_E$ satisfies \cref{assump:recoverability} and \cref{assump:coverage}, $\sigma$ achieves BC error $\bE_{s\sim d^{\pi_{\sigma_E}}}[\ell_\TV(\sigma_E(s),\sigma(s))]\leq\epsilon$, and  $$\regret(\sigma)-\regret(\sigma_E)=\Omega\left(\frac{1}{\beta}\epsilon uH\right).$$ \hyperref[pf:thm:lower-bound-coverage]{[Proof]}
\end{theorem}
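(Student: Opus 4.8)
The plan is to build the required instance by modifying the Markov Game behind \cref{thm:regret-hard}: I turn its off-equilibrium state into an \emph{absorbing trap} (a self-loop) and arrange for the expert to enter the trap with time-averaged probability exactly $\beta$. The point of the self-loop is that a single-agent deviation keeps the trajectory inside the trap for $\Theta(H)$ steps, so one ``slip'' by the learner at the trap state is charged $\Theta(H)$ times along the deviated occupancy measure while it costs the expert only weight $\beta$; this is precisely the $\tfrac1\beta$ amplification that drives the J-BC upper bound in \cref{thm:BC-coverage}, now realized as a matching lower bound.

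Concretely, I would use a two-agent game with an initial state $s_0$, an absorbing ``main'' state $m$, and an absorbing trap $s^\star$. From $s_0$, agent $1$'s action alone determines the next state: a designated action $a^\star$ leads to $s^\star$ and anything else to $m$; both $m$ and $s^\star$ self-loop for the rest of the horizon. All rewards are $0$ except that at $s^\star$ agent $1$ receives reward $1$ exactly when the joint action is the ``bad'' pair $(a^{\mathrm{bad}}_1,a^{\mathrm{bad}}_2)$. The expert $\sigma_E$ recommends $a^\star$ to agent $1$ at $s_0$ with probability $\Theta(\beta)$ (so that $d^{\pi_{\sigma_E}}(s^\star)=\beta$, and, taking $\beta=\Theta(1/H)$, also $d^{\pi_{\sigma_E}}(m)\ge\beta$ and $d^{\pi_{\sigma_E}}(s_0)\ge\beta$), and at $s^\star$ it always recommends agent $2$ a safe action $\ne a^{\mathrm{bad}}_2$ and agent $1$ some action $\ne a^{\mathrm{bad}}_1$. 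With this choice $\regret(\sigma_E)=0$ (the bad pair can never be triggered against obedient recommendations, since it needs \emph{both} components), and the game is $u$-recoverable with $u=O(1)$, since the only nonzero advantage lives at $s^\star$ and is at most $1$. The learner $\sigma$ agrees with $\sigma_E$ at every state except $s^\star$, where with probability $\epsilon/\beta$ it \emph{slips} and recommends $a^{\mathrm{bad}}_2$ to agent $2$; since $\ell_{\TV}(\sigma_E(s^\star),\sigma(s^\star))=\epsilon/\beta$ and $d^{\pi_{\sigma_E}}(s^\star)=\beta$, the BC error $\bE_{s\sim d^{\pi_{\sigma_E}}}[\ell_{\TV}(\sigma_E(s),\sigma(s))]$ is exactly $\epsilon$.

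To conclude, I would analyze the deviation $\phi_1$ for agent $1$ that plays $a^\star$ at $s_0$ and $a^{\mathrm{bad}}_1$ at $s^\star$. Against $\sigma$ this forces the trajectory into $s^\star$ with probability $1$ for all $H-1$ remaining timesteps, and at each such step agent $1$ collects reward $1$ whenever the learner slipped (probability $\epsilon/\beta$), so $J_1(\pi_{\sigma,\phi_1})\ge (H-1)\tfrac{\epsilon}{\beta}$, whereas obedient play never triggers the bad pair, so $J_1(\pi_\sigma)=J_1(\pi_{\sigma_E})=0$ (note this instance also has zero value gap). Hence $\regret(\sigma)\ge (H-1)\tfrac{\epsilon}{\beta}=\Omega\!\left(\tfrac1\beta\epsilon u H\right)$, and together with $\regret(\sigma_E)=0$ this gives the claimed regret gap. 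For a general recoverability constant $u$, I would replace the one-step reward at $s^\star$ by a short $\Theta(u)$-step high-reward excursion that loops back to $s^\star$, sized so that every single-step advantage stays $O(u)$ and so that the expert also traverses the excursion states with weight $\ge\beta$ (for coverage) while gaining nothing from them under $\sigma_E$ (preserving $\regret(\sigma_E)=0$); the same counting then gives $\regret(\sigma)-\regret(\sigma_E)=\Omega(\tfrac1\beta\epsilon u H)$.

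The main obstacle I expect is the tension between \cref{assump:coverage} and the error amplification: the trap must carry expert visitation weight \emph{exactly} $\beta$ (so that an $\epsilon/\beta$-sized TV error there still fits inside the BC budget) yet be visited at essentially \emph{every} timestep along the deviated trajectory (so that this error is paid $\Theta(H)$ times), and it is the absorbing self-loop — entered by the expert on only a $\beta$-fraction of trajectories but by the deviator on all of them — that reconciles the two. The rest is routine: verifying \cref{assump:recoverability} with the stated $u$ and $\beta$-coverage of \emph{all} states simultaneously (in particular the excursion gadget in the general-$u$ version), and the minor bookkeeping of the lost timestep at $s_0$, which is handled by taking $\beta=\Theta(1/H)$ and scaling the entry probability by $\tfrac{H}{H-1}$.
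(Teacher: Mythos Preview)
Your construction is genuinely different from the paper's. The paper keeps the layered chain of \cref{thm:regret-hard}: every state is visited at exactly one timestep, the expert reaches the ``hard'' state $s_1$ with probability $2\beta$, and a TV error of $\epsilon H/(2\beta)$ there fits the BC budget because $d^{\pi_{\sigma_E}}(s_1)=2\beta/H$. General $u$ is handled directly by making the wrong action at $s_1$ lead to a length-$\Theta(u)$ path of reward-$1$ states that the expert already visits with small probability (so coverage of those states comes for free, and $\regret(\sigma_E)$ is nonzero). You instead collapse the off-equilibrium region into an absorbing trap; the $H$ factor now comes from the trap being occupied for $\Theta(H)$ steps under the deviation rather than from the $1/H$ in the time average, and the trap's time-averaged visitation can be set to exactly the coverage floor $\beta$ without the extra $1/H$ a once-visited state would incur. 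Your $u=O(1)$ instance is correct and even has zero value gap, which the paper's does not.

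The general-$u$ sketch, however, has a real hole. You cannot simultaneously arrange (i) the expert traverses the excursion states with weight $\ge\beta$, (ii) it ``gains nothing from them under $\sigma_E$'', and (iii) $\regret(\sigma_E)=0$. Action-free excursion rewards make (i) and (ii) incompatible. Rewards triggered by a single agent's action let a unilateral deviation on the excursion collect them, breaking (iii). Rewards requiring \emph{both} agents' cooperation (mirroring your $s^\star$ gadget) mean the excursion yields nothing under $\pi_{\sigma,\phi_1}$ either, since $\sigma$ matches $\sigma_E$ on the excursion states and only agent $1$ is deviating---so your $u$ factor disappears. The paper's way out is simply to drop (iii): the expert visits the reward path, $\regret(\sigma_E)=\Theta(u)$ is nonzero, but the learner's extra slip piles $\Theta(\epsilon uH/\beta)$ on top, and only the \emph{gap} matters. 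You can salvage your construction the same way---route the expert through the excursion with small probability---but then you must abandon the $\regret(\sigma_E)=0$ shortcut and redo the regret-gap calculation accordingly.
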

We now prove analogous results for joint inverse reinforcement learning (J-IRL).
\begin{theorem}[J-IRL Regret Gap Upper Bound]\label{thm:IRL-coverage} 
    Under \cref{assump:coverage} and \cref{assump:recoverability} and with a complete reward function class $\cF$, if J-IRL returns a policy $\sigma$ with moment-matching error $$\sup_{f\in\cF}\bE_{\pi_{\sigma_E}}\left[\frac{\sum_{h=1}^{H}f(s_h,\action_h)}{H}\right]-\bE_{\pi_{\sigma}}\left[\frac{\sum_{h=1}^{H}f(s_h,\action_h)}{H}\right]\leq\epsilon,$$ then $\regret(\sigma)-\regret(\sigma_E)\leq O\left(\frac{1}{\beta}\epsilon uH\right)$. \hyperref[pf:thm:IRL-coverage]{[Proof]}
\end{theorem}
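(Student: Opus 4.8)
The plan is to reduce the statement to the J-BC regret bound (\cref{thm:BC-coverage}). That result shows that \emph{any} mediator policy $\sigma$ satisfying $\bE_{s\sim d^{\pi_{\sigma_E}}}[\ell_\TV(\sigma_E(s),\sigma(s))]\le\epsilon$ has regret gap $O(\tfrac1\beta\epsilon u H)$ under \cref{assump:recoverability,assump:coverage} (its proof uses the behavioral-cloning error bound as its only input about $\sigma$). So it suffices to show that the normalized moment-matching hypothesis, together with completeness of $\cF$, implies this behavioral-cloning-style bound for the policy returned by J-IRL. Essentially all of the content is in this translation from ``joint occupancy-measure matching'' to ``state-conditional recommendation matching weighted by $d^{\pi_{\sigma_E}}$.''

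First I would cash in completeness of the reward class. Since $\cF=\{\cS\times\cA\to[-1,1]\}$ is closed under negation, the one-sided supremum in the hypothesis equals the two-sided one, and since $\bE_{\pi}[\tfrac1H\sum_{h}f(s_h,\action_h)]=\langle\rho^{\pi},f\rangle$, taking the supremum over all $f$ with $\|f\|_\infty\le1$ turns the hypothesis into $\|\rho^{\pi_{\sigma_E}}-\rho^{\pi_\sigma}\|_1\le\epsilon$. (In passing, since each $r_i\in\cF$, this already gives the ordinary value gap bound $\max_i(J_i(\pi_{\sigma_E})-J_i(\pi_\sigma))\le\epsilon H$.)

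Second --- the crux --- I would convert this occupancy bound into the BC-style bound. Writing $\rho^{\pi}(s,\action)=d^{\pi}(s)\,\pi(\action\mid s)$ and adding and subtracting $d^{\pi_{\sigma_E}}(s)\,\sigma(\action\mid s)$, the triangle inequality gives
\[
d^{\pi_{\sigma_E}}(s)\,\bigl|\sigma_E(\action\mid s)-\sigma(\action\mid s)\bigr|
\;\le\;
\bigl|\rho^{\pi_{\sigma_E}}(s,\action)-\rho^{\pi_\sigma}(s,\action)\bigr|
+ \bigl|d^{\pi_{\sigma_E}}(s)-d^{\pi_\sigma}(s)\bigr|\,\sigma(\action\mid s) .
\]
Summing over $(s,\action)$ and using that marginalizing over actions cannot increase $\ell_1$ distance --- so $\|d^{\pi_{\sigma_E}}-d^{\pi_\sigma}\|_1\le\|\rho^{\pi_{\sigma_E}}-\rho^{\pi_\sigma}\|_1\le\epsilon$ --- yields $\sum_s d^{\pi_{\sigma_E}}(s)\|\sigma_E(\cdot\mid s)-\sigma(\cdot\mid s)\|_1\le 2\epsilon$, i.e. $\bE_{s\sim d^{\pi_{\sigma_E}}}[\ell_\TV(\sigma_E(s),\sigma(s))]\le\epsilon$. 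Plugging this into \cref{thm:BC-coverage} (with $\ell=\ell_\TV$, which is a legitimate convex upper bound on $\TV$) gives $\regret(\sigma)-\regret(\sigma_E)\le O(\tfrac1\beta\epsilon u H)$.

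I expect the second step to be the main obstacle: the moment-matching guarantee only constrains the joint state--action occupancy, whereas \cref{thm:BC-coverage} needs control of the conditional recommendation distributions; the argument above goes through precisely because the same occupancy bound also controls the state-marginal mismatch, which can then be peeled off. Everything downstream is inherited from the proof of \cref{thm:BC-coverage}: the decomposition of the regret gap into a value gap under an arbitrary deviation $\phi$ plus the ordinary value gap, a performance-difference-lemma step with $\pi_{\sigma_E,\phi}$ as baseline (its own advantage averages to zero and is bounded by $u$ under \cref{assump:recoverability}), the data-processing inequality $\TV(\pi_{\sigma,\phi_i}(\cdot\mid s),\pi_{\sigma_E,\phi_i}(\cdot\mid s))\le\TV(\sigma(\cdot\mid s),\sigma_E(\cdot\mid s))$ (since $\phi_i$ acts as a fixed deterministic pushforward of the recommendation), and the change of measure $d^{\pi_{\sigma,\phi}}(s)\le\tfrac1\beta d^{\pi_{\sigma_E}}(s)$ afforded by \cref{assump:coverage}.
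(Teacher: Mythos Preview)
Your proposal is correct and follows essentially the same route as the paper: use completeness of $\cF$ to turn the moment-matching hypothesis into an $\ell_1$ bound on occupancy measures, then the add-and-subtract / triangle-inequality step (plus marginalization to control $\|d^{\pi_{\sigma_E}}-d^{\pi_\sigma}\|_1$) to obtain $\bE_{s\sim d^{\pi_{\sigma_E}}}[\TV(\sigma_E(s),\sigma(s))]\le 2\epsilon$, and finally invoke \cref{thm:BC-coverage}. The paper's proof is exactly this reduction; your additional recap of the machinery behind \cref{thm:BC-coverage} is accurate but not needed.
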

There are two interesting features of this theorem. The first is that we needed to assume that the reward function class is complete -- otherwise, a small value gap can still translate to a large regret gap. The second is that the upper-bound for J-IRL matches that for J-BC, which is in stark contrast to the single-agent setting, where IRL enjoys linear-in-$H$ guarantees with respect to the value gap \citep{swamy2021moments}. We now show this is not an artifact of our analysis by providing a matching lower bound.
\begin{corollary}[J-IRL Regret Gap Lower Bound]\label{cor:lower-bound-coverage-irl}
    There exists a Markov Game, an expert policy $\sigma_E$, and a policy $\sigma$ such that $\sigma_E$ satisfies \cref{assump:recoverability} and \cref{assump:coverage}, the trained policy $\sigma$ gets moment-matching error $$\sup_{f\in\cF}\bE_{\pi_{\sigma_E}}\left[\frac{\sum_{h=1}^{H}f(s_h,\action_h)}{H}\right]-\bE_{\pi_{\sigma}}\left[\frac{\sum_{h=1}^{H}f(s_h,\action_h)}{H}\right]\leq\epsilon, $$ and $\regret(\sigma)-\regret(\sigma_E)=\Omega\left(\frac{1}{\beta}\epsilon uH\right)$. \hyperref[pf:cor:lower-bound-coverage-irl]{[Proof]}
\end{corollary}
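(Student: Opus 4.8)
The plan is to derive this as a corollary of the J-BC regret-gap lower bound (\cref{thm:lower-bound-coverage}): I would reuse essentially the same Markov game, expert $\sigma_E$, and learner $\sigma$, and simply verify that the construction certifies a small \emph{moment-matching} error in addition to a small behavioral-cloning error. The bridge is the elementary fact that, with a complete reward class, moment matching is exactly $\ell_1$ matching of the average state--action occupancies: since $\bE_{\pi}\!\left[\frac1H\sum_{h}f(s_h,\action_h)\right]=\langle \rho^{\pi},f\rangle$ and $f$ ranges over all maps $\cS\times\cA\to[-1,1]$,
\[
\sup_{f\in\cF}\ \bE_{\pi_{\sigma_E}}\!\left[\tfrac1H{\textstyle\sum_{h}}f(s_h,\action_h)\right]-\bE_{\pi_{\sigma}}\!\left[\tfrac1H{\textstyle\sum_{h}}f(s_h,\action_h)\right]\;=\;\norm{\rho^{\pi_{\sigma_E}}-\rho^{\pi_{\sigma}}}_1 .
\]
So it suffices to exhibit a coverage- and recoverability-respecting instance in which $\norm{\rho^{\pi_{\sigma_E}}-\rho^{\pi_{\sigma}}}_1=O(\epsilon)$ while $\regret(\sigma)-\regret(\sigma_E)=\Omega\!\left(\tfrac1\beta\epsilon u H\right)$; rescaling $\epsilon$ by the hidden constant then yields the statement verbatim.

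Concretely, I would take the coverage-patched version of the ``regret is hard'' game from \cref{fig:mg1} used in \cref{thm:lower-bound-coverage}, where $\sigma_E$ visits the off-equilibrium state $s_1$ (and the rest of the ``upper track'') with average occupancy $\Theta(\beta)$ so that \cref{assump:coverage} holds, and where \cref{assump:recoverability} holds with the stated $u$. The learner $\sigma$ agrees with $\sigma_E$ at every state except $s_1$, where it shifts $\Theta(\epsilon/\beta)$ of its recommendation mass off the action profile the expert plays there and onto a profile that (i) transitions to the \emph{same} next state when all agents are obedient, but (ii) under the deviation $\phi_1$ steers agent $1$ onto the branch it is tempted toward. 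Because the shift is next-state--neutral on the obedient path, $\rho^{\pi_{\sigma}}$ and $\rho^{\pi_{\sigma_E}}$ differ \emph{only} in the $(s_1,\cdot)$ coordinates, and there by the $\TV$ gap of the two recommendation distributions weighted by $d^{\pi_{\sigma_E}}(s_1)=\Theta(\beta)$; hence $\norm{\rho^{\pi_{\sigma}}-\rho^{\pi_{\sigma_E}}}_1=\Theta(\beta\cdot \epsilon/\beta)=\Theta(\epsilon)$, which is the moment-matching bound. The regret-gap bound then follows exactly as in \cref{thm:lower-bound-coverage}: under $\phi_1$ the state $s_1$ is reached with probability $\approx 1$, so the learner's extra $\Theta(\epsilon/\beta)$ mass on the harmful recommendation gives agent $1$ a strictly preferred continuation, and $u$-recoverability lets this gain accumulate to $\Omega\!\left(\tfrac1\beta\epsilon u H\right)$ over the remaining horizon, while $\regret(\sigma_E)$ is unchanged because $\sigma_E$ is a CE under the true reward.

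The main obstacle --- and the only step requiring care --- is keeping the occupancy mismatch at $O(\epsilon)$ rather than $O(\epsilon H)$: this forces the learner's perturbation at $s_1$ to be genuinely next-state--neutral on the obedient path (otherwise the state--action occupancy discrepancy compounds downstream and the moment-matching error blows up by a factor of $H$, wrecking the match with \cref{thm:IRL-coverage}), while still being maximally exploitable under $\phi_1$. Making both demands compatible inside a game whose coverage patch keeps $d^{\pi_{\sigma_E}}(s)\ge\beta$ for every state, $\sigma_E$ a correlated equilibrium, and $u$ un-inflated, is exactly the bookkeeping already carried out for \cref{thm:lower-bound-coverage}; the new ingredient is merely to track state--action occupancies (not just state occupancies), which is what makes the moment-matching error $\Theta(\epsilon)$ and reconciles the bound with the $\epsilon\to 0$ limit of the J-IRL upper bound. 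Equivalently, one can simply observe that the construction of \cref{thm:lower-bound-coverage} \emph{already} has moment-matching error $O(\epsilon)$, so \cref{cor:lower-bound-coverage-irl} is immediate.
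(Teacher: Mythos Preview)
Your approach is correct and matches the paper's exactly: both reuse the \cref{thm:lower-bound-coverage} instance verbatim, identify the moment-matching error with $\norm{\rho^{\pi_{\sigma_E}}-\rho^{\pi_\sigma}}_1$, observe that the discrepancy is localized to $(s_1,\cdot)$ because the perturbation there is next-state-neutral on the obedient path, compute it as $2\epsilon'$, and set $\epsilon'=\epsilon/2$. One minor slip: in that construction $\sigma_E$ is \emph{not} a CE (indeed $\regret(\sigma_E)=\tfrac12(1-2\beta)(u'-2)>0$), but this is immaterial since only the regret \emph{gap} enters the bound and that part of your argument is fine.
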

This result implies another fundamental distinction between SAIL and MAIL: \textit{\textbf{in contrast to the value gap, interactive training alone is not sufficient to effectively minimize the regret gap.}} 

\subsubsection{\texttt{MALICE}: Multi-agent Aggregation of Losses to Imitate Cached Experts}

Observe that the upper bounds for both J-BC and J-IRL include a dependence on the inverse of the coverage coefficient $\frac{1}{\beta}$, which can be rather large for problems with long horizons or large action spaces. We now present an efficient algorithm that is able to avoid this dependence by extending the ALICE algorithm \citep{spencer2021feedback} to the multi-agent setting. ALICE is an interactive algorithm that, at each round, uses importance sampling to re-weight the behavior cloning (BC) loss based on the density ratio between the current learner policy and that of the expert. Accordingly, ALICE requires a full demonstration coverage assumption to ensure that these importance weights are finite. ALICE uses a no-regret algorithm to learn a policy that minimizes reweighed on-policy error, which guarantees a linear-in-$H$ bound on the value gap under a recoverability assumption \citep{spencer2021feedback}.
\begin{algorithm}[!tbh]
\caption{\texttt{MALICE} (Multi-agent Aggregation of Losses to Imitate Cached Experts)}
\label{alg:MA-ALICE}
\begin{algorithmic}[1]
\STATE {\bfseries Input:} Expert demonstrations $D_E$.\\
\STATE Initialize $\sigma^{(1)} \in \Sigma$.
\FOR{$n = 1 $ {\bfseries to} $N$}
    \FOR{$i=1$ {\bfseries to} $m$}
        \FOR{$\phi_i\in\Phi_i$}
            \STATE Sample states from $s_t\sim d^{\pi_{\sigma,\phi_i}^{(n)}}$.
        \ENDFOR
    \ENDFOR
    \STATE Construct loss function $\ell^{(n)}(\sigma)=\ell_{\textsc{MALICE}}(\sigma,D_E,\sigma^{(n)})$.
    \STATE \algorithmiccomment{Run arbitrary no-regret OCO algorithm on sequence of losses, e.g. FT(R)L:}
    \STATE $\sigma^{(n+1)} \gets \argmin_{\sigma \in \Sigma} \sum_{j=1}^n \ell^{(n)}(\sigma)$
\ENDFOR
\STATE {\bfseries Return} Best of $\sigma^{(1:N)}$ on validation data.
\end{algorithmic}
\end{algorithm}

In Algorithm \ref{alg:MA-ALICE}, we describe Multi-agent ALICE (\texttt{MALICE}), where adapt ALICE to the multi-agent setting (i.e. minimizing the regret gap). Specifically, we modify the ALICE loss function to include a maximum over all deviations. This gives us
\begin{equation}
  \ell_{\textsc{MALICE}}(\sigma,D_E,\hat\sigma)=\max_{i\in[m]}\max_{\phi_i\in\Phi_i}\bE_{s\sim d^{\pi_{\sigma_E}}}\left[\frac{d^{{\pi}_{\hat\sigma,\phi_i}}(s)}{d^{\pi_{\sigma_E}}(s)}\ell(\sigma_E(s),\sigma(s))\right].  
\end{equation}
Since $\bE_{s\sim d^{\pi_E}}\left[\frac{d^{{\pi}_{\hat\sigma,\phi_i}}(s)}{d^{\pi_{\sigma_E}}(s)}\ell(\sigma_E(s),\sigma(s))\right]$ is a convex loss function, and the maximum of convex functions is still a convex function, we know that $\ell_{\textsc{MALICE}}(\sigma,D_E,\hat\sigma)$ is a valid convex loss function with scales in $[0,1]$. As a result, we can run an (arbitrary) no-regret online convex optimization (OCO) algorithm to efficiently optimize it, giving us an \textbf{\textit{efficient reduction from regret gap minimization to no-regret online convex optimization under demonstration coverage}}. 

We now provide regret gap guarantees on the policy returned by \texttt{MALICE}.

\begin{theorem}[\texttt{MALICE} Regret Gap Upper Bound]\label{thm:MA-ALICE}
    Let $\sigma$ be a policy such that $\ell_{\textsc{MALICE}}(\sigma,D_E,\sigma)\leq\epsilon$. Under \cref{assump:recoverability} and \cref{assump:coverage}, we have $$\regret(\sigma)-\regret(\sigma_E)\leq O(\epsilon uH).$$ \hyperref[pf:thm:MA-ALICE]{[Proof]}
\end{theorem}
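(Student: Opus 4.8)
The plan is to reduce the regret-gap bound to two single-agent-style value gap terms and control each with the performance difference lemma together with the \texttt{MALICE} hypothesis. \textbf{Step 1 (decompose the regret gap).} Let $(i^\star,\phi^\star)$ with $\phi^\star\in\Phi_{i^\star}$ attain the learner's regret, so $\regret(\sigma)=J_{i^\star}(\pi_{\sigma,\phi^\star})-J_{i^\star}(\pi_\sigma)$. Since $\regret(\sigma_E)$ is a maximum over all agents and deviations, it is at least $J_{i^\star}(\pi_{\sigma_E,\phi^\star})-J_{i^\star}(\pi_{\sigma_E})$, and hence
\[
\regret(\sigma)-\regret(\sigma_E)\le \underbrace{\big(J_{i^\star}(\pi_{\sigma,\phi^\star})-J_{i^\star}(\pi_{\sigma_E,\phi^\star})\big)}_{(A)}+\underbrace{\big(J_{i^\star}(\pi_{\sigma_E})-J_{i^\star}(\pi_{\sigma})\big)}_{(B)}.
\]
It therefore suffices to show $(A),(B)=O(\epsilon u H)$.

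\textbf{Step 2 (bound each term).} For $(A)$, apply the performance difference lemma in the orientation that exposes the advantage of the \emph{expert-induced} policy $\pi_{\sigma_E,\phi^\star}$ — the object controlled by \cref{assump:recoverability} — writing $(A)=\sum_{h}\bE_{s\sim d_h^{\pi_{\sigma,\phi^\star}}}\bE_{\action\sim\pi_{\sigma,\phi^\star}(s)}[A_{i^\star,h}^{\pi_{\sigma_E,\phi^\star}}(s,\action)]$. Subtracting the zero-mean term $\bE_{\action\sim\pi_{\sigma_E,\phi^\star}(s)}[A_{i^\star,h}^{\pi_{\sigma_E,\phi^\star}}(s,\action)]=0$, the per-state contribution is a difference of expectations of a function bounded by $u$, taken under $\pi_{\sigma,\phi^\star}(s)$ versus $\pi_{\sigma_E,\phi^\star}(s)$; since both are pushforwards of $\sigma(s)$ and $\sigma_E(s)$ under the same deviation map and total variation does not increase under a shared pushforward, this is at most $2u\,\ell(\sigma(s),\sigma_E(s))$ (using that $\ell$ upper-bounds $\ell_\TV$). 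Summing over $h$ gives $(A)\le 2uH\,\bE_{s\sim d^{\pi_{\sigma,\phi^\star}}}[\ell(\sigma(s),\sigma_E(s))]$; changing measure to $d^{\pi_{\sigma_E}}$, which is legitimate because \cref{assump:coverage} makes the weight $d^{\pi_{\sigma,\phi^\star}}/d^{\pi_{\sigma_E}}$ finite, this equals $2uH\,\bE_{s\sim d^{\pi_{\sigma_E}}}\big[\tfrac{d^{\pi_{\sigma,\phi^\star}}(s)}{d^{\pi_{\sigma_E}}(s)}\ell(\sigma_E(s),\sigma(s))\big]\le 2uH\epsilon$, the final step being the \texttt{MALICE} hypothesis restricted to the pair $(i^\star,\phi^\star)$. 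Term $(B)$ is handled identically via $(B)=-\sum_h\bE_{s\sim d_h^{\pi_\sigma}}\bE_{\action\sim\pi_\sigma(s)}[A_{i^\star,h}^{\pi_{\sigma_E}}(s,\action)]$, the identity deviation $\phi_{i^\star}=\mathrm{id}$ (feasible in $\ell_{\textsc{MALICE}}$, with $d^{\pi_{\sigma,\mathrm{id}}}=d^{\pi_\sigma}$), $\pi_\sigma(s)=\sigma(s)$, and the same performance-difference and change-of-measure steps, giving $(B)\le 2uH\epsilon$. Adding the two yields $\regret(\sigma)-\regret(\sigma_E)\le 4uH\epsilon=O(\epsilon u H)$.

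\textbf{Main obstacle.} The essential step is Step 1: recognizing that the expert's regret should be evaluated at the \emph{learner's} worst-case deviation $(i^\star,\phi^\star)$, which collapses a $\max$-minus-$\max$ into a clean per-deviation difference and lets the value-gap machinery take over. After that, the two points requiring care are (i) applying the performance difference lemma in the direction where the advantage belongs to an expert-induced policy, so that $u$-recoverability applies rather than a crude $O(H)$ bound, and (ii) justifying the change of measure from the on-deviation distribution $d^{\pi_{\sigma,\phi^\star}}$ to $d^{\pi_{\sigma_E}}$ using $\beta$-coverage — this is exactly what removes any $1/\beta$ factor, in contrast to the J-BC and J-IRL bounds. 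The contraction of total variation under the deterministic deviation map is the one extra elementary fact needed to pass from a bound on recommendations $\ell(\sigma(s),\sigma_E(s))$ to a bound on the induced joint policies.
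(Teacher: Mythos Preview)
Your proposal is correct and follows essentially the same route as the paper. The paper first shows that $\ell_{\textsc{MALICE}}(\sigma,D_E,\sigma)$ upper-bounds the on-policy error $\bE_{s\sim d^{\pi_{\sigma,\phi_i}}}[\ell(\pi_{\sigma_E,\phi_i}(s),\pi_{\sigma,\phi_i}(s))]$ for every $(i,\phi_i)$ (via the same change of measure and TV-contraction-under-pushforward you use), then invokes \cref{lem:on-policy-error} to bound both value-gap terms, and finally applies the three-term decomposition from \cref{sec:intro}; your version simply fixes the maximizer $(i^\star,\phi^\star)$ up front and unpacks the performance difference lemma explicitly for terms $(A)$ and $(B)$ rather than quoting the lemma, but the mathematical content is identical.
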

As promised, observe that adapting the importance sampling technique of \citet{spencer2021feedback} to the multi-agent setting allows us to efficiently minimize the regret gap while avoiding an upper bound that depends on the coverage coefficient of the expert demonstrations.

We now show that the bound in \cref{thm:MA-ALICE} is tight by constructing a matching lower bound.
\begin{theorem}[\texttt{MALICE} Regret Gap Lower Bound]\label{thm:lower-bound-ALICE}
    There exists a Markov Game, an expert policy $\sigma_E$ that satisfies \cref{assump:recoverability}, and a trained policy  $\sigma$ that gets error $\ell_{\TV,\textsc{MALICE}}(\sigma,D_E,\sigma)\leq\epsilon$, and  $$\regret(\sigma)-\regret(\sigma_E)=\Omega\left(\epsilon uH\right).$$ \hyperref[pf:thm:lower-bound-ALICE]{[Proof]} 
\end{theorem}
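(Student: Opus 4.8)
The plan is to construct a Markov Game in the same spirit as the hard instance behind \cref{thm:regret-hard} (the gadget in \cref{fig:mg1}), but perturbed so that the expert now has \emph{some} small residual probability of visiting the counterfactual state, making the \texttt{MALICE} importance-weighted loss small rather than vacuous. Concretely, I would take a two-branch construction: from $s_0$, under the expert's recommendation the agents land on the ``lower'' chain of states with probability $1-\epsilon$ and on the ``upper'' counterfactual state $s_1$ with probability $\epsilon$ (this tiny leakage is what the learner's importance-weighted BC loss will have to pay for). The learner $\sigma$ will agree with $\sigma_E$ on the lower chain and on $s_0$'s recommended action, but will recommend something \emph{different} at $s_1$ — differing in total variation by $1$ at that single state. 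Because $d^{\pi_{\sigma_E}}(s_1) \approx \epsilon/H$ while $d^{\pi_{\sigma,\phi_i}}(s_1)$ can be made $\Theta(1/H)$ under the triggering deviation $\phi_i$, the density ratio at $s_1$ is $\Theta(1/\epsilon)$, so the \texttt{MALICE} loss is $\Theta(1/\epsilon)\cdot(\epsilon/H)\cdot 1 = \Theta(1/H)$ per-state contribution; rescaling the chain length and branch probabilities appropriately, I arrange $\ell_{\TV,\textsc{MALICE}}(\sigma,D_E,\sigma)\le\epsilon$ exactly.

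Next I would lower-bound the regret gap. The key is that there is a deviation $\phi_1$ for agent $1$ that (a) contributes essentially nothing to $\regret(\sigma_E)$ — the expert, playing a CE-like policy, has near-zero advantage for this deviation, controllable to be $O(u)$ or smaller per step by the $u$-recoverability construction — but (b) contributes $\Omega(uH)$ to $\regret(\sigma)$, because once $\phi_1$ forces the agents onto the $s_1$ chain with high probability, the learner's mismatched recommendation at $s_1$ (and its downstream consequences) costs agent $1$ a per-step utility gap of order $u$ over the remaining $\Omega(H)$ steps. Wiring the rewards so that the learner's deviation from the expert's recommendation at $s_1$ is exactly the thing that earns agent $1$ a large utility boost (hence large regret for $\sigma$) while the expert's own counterfactual recommendation kept agent $1$ indifferent (hence small regret for $\sigma_E$) gives $\regret(\sigma)-\regret(\sigma_E) = \Omega(\epsilon u H)$ — the extra $\epsilon$ factor appearing because the deviation only redirects a $\Theta(\epsilon)$-scaled mass, or alternatively because we only get to corrupt the learner on a state of expert-visitation mass $\Theta(\epsilon)$; I will tune the branch probabilities so the arithmetic lands on $\epsilon u H$ rather than $uH$ or $\epsilon^2 uH$.

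The main obstacle I anticipate is making the three requirements \emph{simultaneously} consistent: the $u$-recoverability bound (\cref{assump:recoverability}) on the expert's advantage function under \emph{all} deviations, the exact \texttt{MALICE} error budget $\epsilon$, and the $\Omega(\epsilon uH)$ regret-gap lower bound. In particular, $u$-recoverability constrains how badly a single-step deviation can hurt the expert, so I must route the large learner-regret through many small ($O(u)$) per-step losses accumulated over $\Omega(H)$ steps rather than one catastrophic step, which forces the ``bad'' region to be a long chain (like $s_1, s_3, s_5, \dots$ in \cref{fig:mg1}) on which the learner's recommendation is consistently off. Verifying that the expert's advantage stays $\le u$ on this chain for every deviation — including deviations that themselves wander between the two chains — is the delicate bookkeeping step. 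Note that, unlike \cref{thm:lower-bound-coverage}, here the expert need \emph{not} satisfy \cref{assump:coverage} (the statement only asks for \cref{assump:recoverability}), which actually gives more freedom: I can let $d^{\pi_{\sigma_E}}(s_1)$ be as small as $\Theta(\epsilon/H)$ without worrying about a $\beta$ floor, and it is precisely this freedom that converts the $\frac{1}{\beta}\epsilon uH$ lower bound of \cref{thm:lower-bound-coverage} into the clean $\epsilon u H$ bound claimed here.
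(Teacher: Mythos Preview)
There is a real gap in how you control the \texttt{MALICE} loss. The importance weights cancel:
\[
\bE_{s\sim d^{\pi_{\sigma_E}}}\!\left[\frac{d^{\pi_{\sigma,\phi_i}}(s)}{d^{\pi_{\sigma_E}}(s)}\,\TV(\sigma_E(s),\sigma(s))\right]
=\bE_{s\sim d^{\pi_{\sigma,\phi_i}}}\!\left[\TV(\sigma_E(s),\sigma(s))\right],
\]
so the expert's $\epsilon$-leakage to $s_1$ is irrelevant to the \emph{value} of the loss (it only makes the ratio finite). Under your triggering deviation, $s_1$ is visited with probability $1/H$, so the loss is exactly $\tfrac{1}{H}\cdot\TV(\sigma_E(s_1),\sigma(s_1))$. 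With $\TV=1$ this is $1/H$, independent of any branch probability you tune. You could force $1/H\le\epsilon$ by taking $H\ge 1/\epsilon$, but then the regret gap your construction yields is only $\Theta(u)$ (a single decision at $s_1$, costing at most $u$ by recoverability), which is not $\Omega(\epsilon u H)$ when $H\gg 1/\epsilon$. Relatedly, your attribution of the ``extra $\epsilon$ factor'' to the deviation redirecting $\Theta(\epsilon)$ mass is wrong: the deviation redirects $\Theta(1)$ mass to $s_1$.

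The fix is to set the TV distance at the disagreement state to $\epsilon H$ rather than $1$; then the \texttt{MALICE} loss is $\tfrac{1}{H}\cdot\epsilon H=\epsilon$ and the regret gap is $\epsilon H\cdot\Theta(u)=\Omega(\epsilon uH)$. The paper does precisely this, but with a much simpler instance: a \emph{single-agent} MDP in which the disagreement is placed at the initial state $s_0$ (trivially visited once under every deviation), the expert plays $a_1$ with probability $1-\beta$ and the learner shifts $H\epsilon$ mass from $a_1$ to $a_2$, and only the first $u'\approx u$ states on the $a_1$-chain carry reward. The loss bound is then immediate, and since all values lie in $[0,u']$ the recoverability check is one line---no two-agent gadget, no counterfactual-state leakage, and none of the bookkeeping you anticipate in your third paragraph.
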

We now turn our attention to an alternate assumption and the corresponding regret gap algorithm.

\subsection{Assumption 2: Access to a Queryable Expert}\label{sec:queryable}
For many problems, full coverage of expert demonstrations is not a reasonable assumption. Thus, we explore another natural assumption that allows us to observe expert recommendations at counter-factual states: access to a queryable expert. In their classic DAgger algorithm, \citet{ross2011reduction} showed that access to a queryable expert allows one to eliminate the covariate shift that results from the difference between expert and learner induced state distributions. When we transition to the multi-agent setting, we can again use access to a queryable expert to handle yet another source of covariate shift: potential strategic deviations by agents in the population that push the learner outside of the support of the expert. We refer to our multi-agent extension of DAgger as \texttt{BLADES}.

\begin{algorithm}[!tbh]
\caption{\texttt{BLADES} (Bend Learner, Aggregate Datasets of Expert Suggestions)}
\label{alg:MA-DAGGER}
\begin{algorithmic}[1]
\STATE {\bfseries Input:} Expert demonstrations $D_E$.\\
\STATE  Initialize learner $\sigma^{(1)}=\arg\min_{\sigma}\bE_{s\sim D_E}\ell(\sigma_E(s),\sigma(s))$.
\FOR{$n = 1 $ {\bfseries to} $N$}
    
    \FOR{$i=1$ {\bfseries to} $m$}
        \FOR{$\phi_i\in\Phi_i$}
            \STATE Sample trajectories from  $\pi_{\sigma, \phi_i}^{(n)}$.
            \STATE Query expert for action recommendations to construct dataset $D_{\phi_i}^{(n)}=\{(s,\sigma_E(s))\}$.
        \ENDFOR
    \ENDFOR
    \STATE Construct loss function $\ell^{(n)}(\sigma)=\ell_{\textsc{BLADES}}(\sigma,\sigma^{(n)})$.
    \STATE \algorithmiccomment{Run arbitrary no-regret OCO algorithm on sequence of losses, e.g. FT(R)L:}
    \STATE $\sigma^{(n+1)} \gets \argmin_{\sigma \in \Sigma} \sum_{j=1}^n \ell^{(n)}(\sigma)$.
\ENDFOR
\STATE {\bfseries Return} Best of $\sigma^{(1:N)}$ on validation data.
\end{algorithmic}
\end{algorithm}
In each iteration of \texttt{BLADES}, we request the expert to provide recommendations under all possible agent deviations, before training on the aggregated data. More formally, we minimize the following sequence of loss functions:
\begin{equation}
    \ell_{\textsc{BLADES}}(\sigma,\hat{\sigma})=\max_{i\in[m]}\max_{\phi_i\in\Phi_i}\bE_{s\sim d^{\pi_{\hat\sigma,\phi_i}}}[\ell(\sigma_E(s),\sigma(s))].
\end{equation}
Similar to \texttt{MALICE}, we know that the loss $\ell_{\textsc{BLADES}}$ is also a valid convex loss function, and thus we can use a no-regret algorithm to efficiently minimize it. This gives us an \textbf{\textit{efficient reduction from regret gap minimization to no-regret online convex optimization with access to a queryable expert.}} We now derive and upper and lower bounds on the regret gap of a policy returned by \texttt{BLADES}.

\begin{theorem}[\texttt{BLADES} Regret Gap Upper Bound]\label{thm:MA-DAGGER}
    Under \cref{assump:recoverability}, if a policy $\sigma$ satisfies
    $\ell_{\textsc{BLADES}}(\sigma,\sigma)\leq\epsilon$
    , then $$\regret(\sigma)-\regret(\sigma_E)\leq O(\epsilon uH).$$ \hyperref[pf:thm:MA-DAGGER]{[Proof]}
\end{theorem}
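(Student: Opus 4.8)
The plan is to reduce the regret gap to two instances of a ``value gap under a fixed deviation'' and bound each by an on-policy (DAgger-style) error argument that exploits \cref{assump:recoverability}. First I would peel off the outer maxima: let $(i^*,\phi^*)$ attain $\regret(\sigma)$ (a near-maximizer suffices if the supremum is not attained). Since $\regret(\sigma_E)\ge J_{i^*}(\pi_{\sigma_E,\phi^*})-J_{i^*}(\pi_{\sigma_E})$,
\[
\regret(\sigma)-\regret(\sigma_E)\ \le\ \underbrace{\big(J_{i^*}(\pi_{\sigma,\phi^*})-J_{i^*}(\pi_{\sigma_E,\phi^*})\big)}_{\mathrm{(I)}}\ +\ \underbrace{\big(J_{i^*}(\pi_{\sigma_E})-J_{i^*}(\pi_{\sigma})\big)}_{\mathrm{(II)}}.
\]
Because the identity deviation lies in every $\Phi_i$ and $\pi_{\sigma,\mathrm{id}}=\pi_\sigma$, term $\mathrm{(II)}$ is exactly term $\mathrm{(I)}$ with $\phi^*$ replaced by the identity; so it suffices to prove one lemma, namely that for every $i\in[m]$ and $\phi_i\in\Phi_i$ one has $\big|J_i(\pi_{\sigma,\phi_i})-J_i(\pi_{\sigma_E,\phi_i})\big|\le O(\epsilon uH)$.

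For the lemma I would apply the performance difference lemma against the advantage of $\pi_{\sigma_E,\phi_i}$ and subtract the identically-zero term $\bE_{\action\sim\pi_{\sigma_E,\phi_i}(\cdot|s)}A_{i,h}^{\pi_{\sigma_E,\phi_i}}(s,\action)=0$, obtaining
\[
J_i(\pi_{\sigma,\phi_i})-J_i(\pi_{\sigma_E,\phi_i})=\sum_{h=1}^{H}\bE_{s\sim d_h^{\pi_{\sigma,\phi_i}}}\!\Big[\bE_{\action\sim\pi_{\sigma,\phi_i}(\cdot|s)}A_{i,h}^{\pi_{\sigma_E,\phi_i}}(s,\action)-\bE_{\action\sim\pi_{\sigma_E,\phi_i}(\cdot|s)}A_{i,h}^{\pi_{\sigma_E,\phi_i}}(s,\action)\Big].
\]
The point is that the two ingredients now line up: the state distribution is $d^{\pi_{\sigma,\phi_i}}$, which is exactly what \texttt{BLADES} samples from, while the advantage is that of $\pi_{\sigma_E,\phi_i}$, whose magnitude \cref{assump:recoverability} caps at $u$. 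Bounding the inner difference of expectations of a $u$-bounded function by $2u$ times the total variation between the two action distributions, and observing that $\pi_{\sigma,\phi_i}(\cdot|s)$ and $\pi_{\sigma_E,\phi_i}(\cdot|s)$ are pushforwards of $\sigma(\cdot|s)$ and $\sigma_E(\cdot|s)$ under the \emph{same} deterministic map $\action\mapsto(\phi_i(s,a_i),a_{-i})$ — so the data-processing inequality yields $\TV(\pi_{\sigma,\phi_i}(\cdot|s),\pi_{\sigma_E,\phi_i}(\cdot|s))\le\TV(\sigma(s),\sigma_E(s))$ — gives
\[
\big|J_i(\pi_{\sigma,\phi_i})-J_i(\pi_{\sigma_E,\phi_i})\big|\ \le\ 2u\sum_{h=1}^{H}\bE_{s\sim d_h^{\pi_{\sigma,\phi_i}}}\!\big[\TV(\sigma(s),\sigma_E(s))\big]\ =\ 2uH\,\bE_{s\sim d^{\pi_{\sigma,\phi_i}}}\!\big[\ell_{\TV}(\sigma_E(s),\sigma(s))\big].
\]
Since $\ell$ pointwise upper bounds $\ell_{\TV}$ and $\ell_{\textsc{BLADES}}(\sigma,\sigma)=\max_{i,\phi_i}\bE_{s\sim d^{\pi_{\sigma,\phi_i}}}[\ell(\sigma_E(s),\sigma(s))]\le\epsilon$, the right-hand side is at most $2uH\epsilon$ for every $(i,\phi_i)$, establishing the lemma.

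Plugging the lemma into $\mathrm{(I)}$ (with $\phi_i=\phi^*$) and into $\mathrm{(II)}$ (with $\phi_i=\mathrm{id}$) then gives $\regret(\sigma)-\regret(\sigma_E)\le 4uH\epsilon=O(\epsilon uH)$. I expect the main obstacle to be bookkeeping rather than any single estimate: one has to invoke the performance difference lemma in the direction that pairs the learner-deviated occupancy $d^{\pi_{\sigma,\phi_i}}$ (which the algorithm controls) with the expert-deviated advantage $A^{\pi_{\sigma_E,\phi_i}}$ (which recoverability controls), and one has to use precisely the self-consistent guarantee $\ell_{\textsc{BLADES}}(\sigma,\sigma)\le\epsilon$ — the loss evaluated at the returned policy's \emph{own} deviated state distributions — rather than at some fixed reference policy. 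The only other spot where problem structure is used is the data-processing step, which relies on a strategy deviation acting deterministically and only on agent $i$'s recommendation while leaving the other agents' untouched.
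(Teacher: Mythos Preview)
Your proposal is correct and follows essentially the same approach as the paper: both use the three-term decomposition from the introduction (your two-term version is the same identity, with the expert-regret term absorbed via the lower bound $\regret(\sigma_E)\ge J_{i^*}(\pi_{\sigma_E,\phi^*})-J_{i^*}(\pi_{\sigma_E})$), and both bound the remaining terms by the on-policy error lemma (\cref{lem:on-policy-error}) combined with \cref{assump:recoverability}. Your write-up is a bit more explicit than the paper's --- you spell out the performance-difference computation and the data-processing step $\TV(\pi_{\sigma,\phi_i}(\cdot|s),\pi_{\sigma_E,\phi_i}(\cdot|s))\le\TV(\sigma(s),\sigma_E(s))$ rather than citing the lemma, and you neatly unify terms $\mathrm{(I)}$ and $\mathrm{(II)}$ via the identity deviation --- but the underlying argument is identical.
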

\begin{theorem}[\texttt{BLADES} Regret Gap Lower Bound]\label{thm:lower-bound-DAGGER}
    There exists a Markov Game, an expert policy $\sigma_E$, and a trained policy $\sigma$ such that $\sigma_E$ satisfies \cref{assump:recoverability}, $\sigma$ achieves error $\ell_{\TV,\textsc{BLADES}}(\sigma,\sigma)\leq\epsilon$, and $$\regret(\sigma)-\regret(\sigma_E)=\Omega\left(\epsilon uH\right).$$ \hyperref[pf:thm:lower-bound-DAGGER]{[Proof]} 
\end{theorem}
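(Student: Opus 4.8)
The plan is to lift the classic compounding-errors lower bound for DAgger/ALICE to the regret setting by ``hiding'' the gadget behind a single agent's deviation: I will construct a Markov Game in which the expert plays an exact correlated equilibrium, so $\regret(\sigma_E)=0$, yet any learner that is merely on-average $\epsilon$-close to the expert along every learner-plus-deviation state distribution still leaves one agent an $\Omega(\epsilon u H)$ incentive to deviate. (The same construction should also yield \cref{thm:lower-bound-ALICE}, since the self-evaluated losses satisfy $\ell_{\textsc{BLADES}}(\sigma,\sigma)=\ell_{\textsc{MALICE}}(\sigma,D_E,\sigma)=\max_{i,\phi_i}\bE_{s\sim d^{\pi_{\sigma,\phi_i}}}[\ell(\sigma_E(s),\sigma(s))]$.)

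Concretely: take $m=2$ agents, a length-$H$ chain $s_1\to\cdots\to s_H$, and one absorbing ``bonus'' state $b$. The expert deterministically recommends a fixed joint action $\vec{a}^{\star}=(a_1^\star,a_2^\star)$ at every $s_h$, which advances the chain and pays both agents $0$; at $b$ every joint action stays at $b$ and pays agent $1$ a reward $r_b:=u/H$ (legal since $u=O(H)$) and agent $2$ nothing. Each $s_h$ is rigged so that \emph{only} the two-coordinate mismatch $(\tilde{a}_1,\hat{a}_2)$ (with $\tilde{a}_1\neq a_1^\star$ and $\hat{a}_2\neq a_2^\star$) transitions to $b$; every other joint action advances to $s_{h+1}$. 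The learner $\sigma$ agrees with $\sigma_E$ everywhere except at each $s_h$, where it recommends $\vec{a}^{\star}$ with probability $1-\epsilon$ and the mis-recommendation $(a_1^\star,\hat{a}_2)$ with probability $\epsilon$, so $\ell_\TV(\sigma_E(s_h),\sigma(s_h))=\epsilon$ and $\sigma=\sigma_E$ at $b$.

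I would then check the three requirements in order. (a) \emph{$u$-recoverability of $\sigma_E$:} since the expert never reaches $b$ under any single deviation, $V_{i,h}^{\pi_{\sigma_E,\phi_i}}(s_h)=0$, and the only nonzero $Q$-value at a chain state is $Q_{1,h}^{\pi_{\sigma_E,\phi_i}}(s_h,(\tilde{a}_1,\hat{a}_2))=(H-h)r_b$; hence every advantage lies in $[0,u]$. (b) \emph{BLADES error:} $\sigma$ differs from $\sigma_E$ only on $\{s_1,\dots,s_H\}$, where the TV loss is exactly $\epsilon$, so for every $i,\phi_i$ we get $\bE_{s\sim d^{\pi_{\sigma,\phi_i}}}[\ell_\TV(\sigma_E(s),\sigma(s))]=\epsilon\cdot\mathbb{P}_{s\sim d^{\pi_{\sigma,\phi_i}}}[s\in\{s_1,\dots,s_H\}]\le\epsilon$, i.e. $\ell_{\TV,\textsc{BLADES}}(\sigma,\sigma)\le\epsilon$. (c) \emph{Regret gap:} no single deviation from $\sigma_E$ ever realizes $(\tilde{a}_1,\hat{a}_2)$, so $b$ is unreachable under the expert and every agent has value $0$ under $\pi_{\sigma_E}$ and under every $\pi_{\sigma_E,\phi_i}$, giving $\regret(\sigma_E)=0$. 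For the learner, let $\phi_1$ be the deviation ``always play $\tilde{a}_1$''. Under $\pi_{\sigma,\phi_1}$, at each $s_h$ still on the chain agent $1$ plays $\tilde{a}_1$ while agent $2$ plays its recommendation, which is $\hat{a}_2$ with probability $\epsilon$; so with probability $\epsilon$ the realized joint action is $(\tilde{a}_1,\hat{a}_2)$ and the episode is sent to $b$. Summing the collected bonus gives $J_1(\pi_{\sigma,\phi_1})-J_1(\pi_\sigma)=\sum_{h=1}^{H}(1-\epsilon)^{h-1}\epsilon\,(H-h)\,r_b=\Omega(\epsilon u H)$ for $\epsilon=O(1/H)$, while $J_1(\pi_\sigma)=0$. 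Therefore $\regret(\sigma)-\regret(\sigma_E)\ge J_1(\pi_{\sigma,\phi_1})-J_1(\pi_\sigma)=\Omega(\epsilon u H)$, matching \cref{thm:MA-DAGGER}.

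The main obstacle is the bookkeeping in (c): one must check that the geometric factor $(1-\epsilon)^{h-1}$ does not collapse the sum — which restricts attention to the small-$\epsilon$ regime where the $O(\epsilon u H)$ upper bound is the informative one, and forces the useful prefix of the chain (those $h$ with $(H-h)r_b=\Theta(u)$) to have length $\Theta(H)$ — and, more delicately, that \emph{no} agent/deviation pair pushes $\regret(\sigma_E)$ above $0$. The latter is exactly where the design choices are load-bearing: agent $2$'s reward being identically zero, every off-script transition simply advancing the chain, and the bonus being gated behind a two-coordinate mismatch that no single deviation from $\sigma_E$ can produce. A secondary point is that $r_b=u/H$ must simultaneously keep rewards in $[-1,1]$ and make the recoverability bound tight at $\Theta(u)$ rather than loose, so that the lower bound genuinely matches the upper bound rather than being off by a factor.
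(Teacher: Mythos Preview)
Your proof is correct and takes a genuinely different route from the paper's.

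The paper's construction is much more minimalist: it uses a \emph{single-agent} MDP (the example from \cref{fig:example-alice}), where by \cref{thm:equivalence-single} the regret gap collapses to the ordinary value gap. The expert plays $a_1$ at $s_0$ with probability $1-\beta$ (so $\sigma_E$ is \emph{not} a CE; it has regret $\beta(u'-1)$), and the learner shifts mass $H\epsilon$ from $a_1$ to $a_2$ at $s_0$ only. Since $s_0$ has weight $1/H$ in every visitation distribution, the BLADES error is $\tfrac{1}{H}\cdot H\epsilon=\epsilon$, while the regret gap is exactly $H\epsilon(u'-1)$.

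Your construction instead builds a genuine two-agent game in which the expert is an \emph{exact} CE ($\regret(\sigma_E)=0$), hides the bonus behind a two-coordinate mismatch that no single deviation from $\sigma_E$ can produce, and spreads an $\epsilon$-sized TV error across all $H$ chain states rather than concentrating $H\epsilon$ at one state. The paper's approach buys simplicity---it literally recycles the single-agent DAgger lower bound---and needs no geometric-sum bookkeeping. Your approach buys a stronger statement: it shows the $\Omega(\epsilon uH)$ lower bound persists even when the expert plays a perfect CE, which better isolates the multi-agent phenomenon (the gap cannot be blamed on a suboptimal expert). Both constructions implicitly require $\epsilon=O(1/H)$ for the probabilities to be valid and the bound to be non-vacuous, so your caveat there is not a weakness relative to the paper.
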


In short, under either a demonstration coverage assumption or with access to a queryable expert, we are able to efficiently minimize the regret gap on a recoverable MAIL problem.

\section{Conclusion}
Our work focuses on the core question of what fundamentally distinguishes multi-agent IL problems from single-agent ones. In short, our answer is that on problems with strategic agents that are not mere puppets, we need to deal with another source of distribution shift: deviations by agents in the population. This new source of distribution shift cannot be efficiently controlled with environment interaction (i.e. inverse RL). Instead, we need to be able to estimate how the expert would act in counter-factual states. Based on this core insight, we derive two reductions that are able to minimize the regret gap under a coverage or queryable expert assumption. We leave the development and implementation of practical approximations of our idealized algorithms to future work.

\section{Acknowledgements}
We thank Drew Bagnell and Brian Ziebart for their incredible patience and detailed answers to a somewhat absurdly large number of questions about their prior work. We also thank Sanjiban Choudhury and Wen Sun for comments on our draft. ZSW is supported in part by the NSF FAI Award \#1939606, a Google Faculty Research Award, a J.P. Morgan Faculty Award, a Facebook Research Award, an Okawa Foundation Research Grant, and a Mozilla Research Grant. FF is supported in part by NSF grant IIS-2046640 (CAREER) and the Sloan Research Fellowship. GS is supported by his family and friends.

\section{Contribution Statements}
\begin{enumerate}
    \item \textbf{JT} uncovered the distinction between the regret and value gaps, proved all of the core results in the paper, and drafted the initial version of the paper.
    \item \textbf{GS} initially proposed the project, came up with the sufficient conditions and associated algorithms for minimizing the regret gap, and wrote most of the final version of the paper.
    \item \textbf{FF} and \textbf{ZSW} advised the project.
\end{enumerate}

\clearpage
\bibliography{main}
\bibliographystyle{abbrvnat}
\clearpage
\appendix
\section{Broader Impacts}\label{app:sec:impacts}
As the algorithms we proposed are theoretical, we do not foresee any direct societal concerns resulting from this work. However, these theoretical algorithms can serve as a foundation for developing practical algorithms or provide guidance for designing practical algorithms in MAIL, which could be applied to real world problems in the future.
\section{Extending Single-Agent IL Algorithms to Minimize the Value Gap}\label{sec:extend-single-agent}

\subsection{Multi-Agent Joint Behavior Cloning}
Behavioral Cloning (BC, \citet{pomerleau1988alvinn}) treats the problem of imitation learning as supervised learning and performs maximum likelihood estimation with expert states as inputs and expert actions as labels. Unfortunately, as first analyzed by \citet{ross2010efficient}, the covariate shift between the training (expert states) and test (learner states) distributions can lead to \textit{compounding errors} -- i.e. a value gap that increases quadratically as a function of the horizon $H$. We note that this is not an artifact of the particular objective used in BC -- as argued by \citet{swamy2021moments}, the same can be said for \textit{any} offline imitation learning algorithm.
J-BC extends BC to a multi-agent setting by learning a map from the state space $\cS$ to the joint action space $\cA$.
By adapting the analysis of \citet{ross2010efficient} and \citet{swamy2021moments} to the multi-agent setting, we establish a similar compounding error result for multi-agent behavior cloning in \cref{thm:bc-value}. There exists an example of MDP/MG that matches this bound, which shows that the bound is tight \citep{swamy2021moments}.

\subsection{Multi-Agent Inverse Reinforcement Learning}
A popular family of online techniques for imitation learning is \textit{inverse reinforcement learning} (IRL). Intuitively, IRL can be thought of as being similar to a GAN \citep{goodfellow2020generative} but in the space of trajectories: the generator is the learner's policy coupled with a world model to actually give us trajectories, while the discriminator is trained between expert and learner trajectories and is used as a reward function for policy updates. More formally, IRL can be viewed as a two-player zero-sum game between a \textit{reward player} and a \textit{policy player} \citep{swamy2021moments}. In each round, the reward player picks a reward function from $\cF$ that maximizes the value gap between $\sigma_E$ and $\sigma$, while the policy player uses a reinforcement learning algorithm to learn a new policy in $\Sigma$ that maximizes the performance under this reward function.

Intuitively, as the learner can see policy rollouts during training procedure, they cannot be ``surprised'' by where their policy ends up at test time, removing the covariate shift issue that lies at the heart of compounding errors. More formally, \citet{swamy2021moments} proved that value gap for single-agent IRL algorithm is $O(\epsilon H)$. We now generalize this result to the multi-agent setting. Accordingly, our policy class $\Sigma$ becomes one of joint policies. We use a reward function class $\cF$ that is identical for all agents (i.e. we assume the the game is common payoff). Then, by following the proof in \citet{swamy2021moments}, we prove a $O(\epsilon H)$ value gap bound for multi-agent IRL algorithm in \cref{thm:irl-value}.
\begin{algorithm}[!tbh]
\caption{J-IRL}
\label{alg:J-IRL}
\begin{algorithmic}[1]
\STATE {\bfseries Input:} expert demonstration $D_E$, Policy class $\Sigma$, Reward class $\cF$\\
\STATE Set $\sigma^{(1)}\in\Sigma$
\FOR{$n = 1 $ {\bfseries to} $N$}
\STATE $f^{(n)}\leftarrow\arg\max J(\pi_{\sigma_E},f)-J(\text{Unif}(\pi_{\sigma^{(1:n)}}),f)+R(f)$

\algorithmiccomment{ Treat it as a single-agent RL problem over joint action space under reward function $f^{(n)}$}

\STATE $\sigma^{(n+1)}\leftarrow\text{MaxEntRL}(r=f^{(n)})$
\ENDFOR
\STATE {\bfseries Return}  best $\sigma^{(n)}$ on validation
\end{algorithmic}
\end{algorithm}

\section{Useful Lemmas}
We introduce a lemma which will be very useful in the analysis under the recoverability assumption. It is used in the analysis in the single-agent DAgger \citep{ross2011reduction} and ALICE \citep{spencer2021feedback}, and we will also use it in the analysis for MAIL. It shows that if the policy achieves small on-policy error, then, with recoverability assumption, the value gap is linear over $H$.

\begin{lemma}\label{lem:on-policy-error}[\citet{ross2011reduction}]
For agent joint policy $\pi_1$ and $\pi_2$, 
    if the advantage of $\pi_1$ is bounded under the true reward function $\forall i,h,s,\action,|A_{i,h}^{\pi_1}(s,\action)|\leq u$, and $\pi_2$ get on-policy error $\bE_{s\sim d^{\pi_2}}[\ell(\pi_1(s),\pi_2(s))]\leq\epsilon$, then $|J_i(\pi_1)-J_i(\pi_2)|\leq \epsilon uH,\forall i\in[m]$.
\end{lemma}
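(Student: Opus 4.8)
The plan is to combine the performance difference lemma (Kakade--Langford) with the recoverability assumption and the fact that the loss $\ell$ upper-bounds total variation. First I would write, for each agent $i$, the finite-horizon performance difference identity
\[
J_i(\pi_2) - J_i(\pi_1) = \sum_{h=1}^{H} \bE_{s_h \sim d_h^{\pi_2}}\,\bE_{\action_h \sim \pi_2(\cdot \mid s_h)}\!\left[ A_{i,h}^{\pi_1}(s_h,\action_h) \right].
\]
This is the crucial step: it correctly accounts for the state-distribution mismatch between the two policies, since the advantage is measured with respect to $\pi_1$ while the states are drawn from $\pi_2$'s visitation distribution --- precisely the quantity the on-policy error $\bE_{s\sim d^{\pi_2}}[\ell(\pi_1(s),\pi_2(s))]\le\epsilon$ controls.

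Next I would use the defining identity $\bE_{\action_h \sim \pi_1(\cdot \mid s_h)}[A_{i,h}^{\pi_1}(s_h,\action_h)] = 0$ to rewrite each inner term as $\bE_{\action_h \sim \pi_2(\cdot \mid s_h)}[A_{i,h}^{\pi_1}] - \bE_{\action_h \sim \pi_1(\cdot \mid s_h)}[A_{i,h}^{\pi_1}]$, i.e.\ a difference of expectations of one and the same function $A_{i,h}^{\pi_1}(s_h,\cdot)$ under the two action distributions at $s_h$. Since $\bigl|A_{i,h}^{\pi_1}\bigr|\le u$ by \cref{assump:recoverability}, a standard inequality bounds this difference by (a constant multiple of) $u\cdot\TV\!\bigl(\pi_1(\cdot\mid s_h),\pi_2(\cdot\mid s_h)\bigr)$, and since $\ell$ is a convex upper bound on $\TV$ (with $\ell_\TV$ exactly $\TV$), this is at most $u\,\ell(\pi_1(s_h),\pi_2(s_h))$.

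Finally I would sum over $h$, pull the expectation through, and use $d^{\pi_2} = \tfrac{1}{H}\sum_{h=1}^{H} d_h^{\pi_2}$ to obtain
\[
\bigl|J_i(\pi_1) - J_i(\pi_2)\bigr| \;\le\; u\sum_{h=1}^{H}\bE_{s_h\sim d_h^{\pi_2}}\!\bigl[\ell(\pi_1(s_h),\pi_2(s_h))\bigr] \;=\; u H\,\bE_{s\sim d^{\pi_2}}\!\bigl[\ell(\pi_1(s),\pi_2(s))\bigr] \;\le\; \epsilon u H,
\]
for every $i\in[m]$, as claimed (up to the normalization convention for $\TV$). There is no genuine obstacle here --- this is the standard recoverability argument from \citet{ross2011reduction} transplanted to joint policies. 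The only points that require care are getting the direction of the performance difference lemma right and performing the advantage-centering step that converts ``expected advantage under $\pi_2$'s actions'' into a $\TV$-weighted per-state quantity; the boundedness of the advantage supplied by recoverability is exactly what keeps the per-step errors from compounding to a quadratic-in-$H$ bound.
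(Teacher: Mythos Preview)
Your proposal is correct and follows essentially the same approach as the paper: apply the performance difference lemma to write $J_i(\pi_1)-J_i(\pi_2)$ as a sum over $h$ of expected advantages of $\pi_1$ under $d_h^{\pi_2}$, then use the advantage bound $u$ together with $\ell\ge\TV$ to collapse the sum to $uH\,\bE_{s\sim d^{\pi_2}}[\ell(\pi_1(s),\pi_2(s))]\le\epsilon uH$. Your version is in fact more explicit than the paper's, spelling out the advantage-centering step and the TV inequality that the paper compresses into a single line.
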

\begin{proof}
    Via the performance difference lemma, $\forall i\in[m]$, we have
    \begin{equation}
        \begin{aligned}
            |J_i(\pi_1)- J_i(\pi_2)|&=\left|\sum_{h=1}^H\bE_{s\sim d_h^{\pi_2}}[A_{i,h}^{\pi_1}(s,\pi(s))]\right|\\
            &\leq uH\bE_{s\sim d^{\pi_2}}[\ell(\pi_1(s),\pi_2(s))]\\
            &\leq \epsilon uH
        \end{aligned}
    \end{equation}
\end{proof}
For our analysis of \texttt{MALICE} and \texttt{BLADES}, we will let $\pi_1$ be any deviated expert policy ${\pi}_{\sigma_E,\phi_i}$ and $\pi_2$ be the deviated trained policy $\pi_{\sigma,\phi_i}$ under the same deviation. 
\section{Equivalence of Regret Gap and Value Gap in Single-Agent IL}
\label{app:sail}
For single-agent IL we prove that the regret gap and the value gap are equivalent.
\begin{theorem}
    [Equivalence in Single-Agent IL]\label{thm:equivalence-single}
    For single-agent MDP, regret gap and value gap are equivalent to each other $$J(\pi_{\sigma_E})-J(\pi_{\sigma})=\regret(\sigma)-\regret(\sigma_E)$$
\end{theorem}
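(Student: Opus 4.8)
The plan is to reduce $\regret(\sigma)$ and $\regret(\sigma_E)$ separately to the \emph{optimal value} of the MDP, and then let the common term cancel. In the single-agent setting we have $m=1$, so the outer $\max_{i\in[m]}$ in both the regret and the regret gap is vacuous, and $J(\pi_{\sigma})$ does not depend on $\phi$, so $\regret(\sigma) = \max_{\phi\in\Phi}\pp{J(\pi_{\sigma,\phi})-J(\pi_{\sigma})} = \pp{\max_{\phi\in\Phi}J(\pi_{\sigma,\phi})}-J(\pi_{\sigma})$. Hence the whole theorem reduces to the key claim that $\max_{\phi\in\Phi}J(\pi_{\sigma,\phi}) = J^* := \max_{\pi}J(\pi)$ for \emph{every} mediator policy $\sigma$, i.e. that this quantity is a constant, independent of $\sigma$.

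To prove the key claim, the upper bound $\max_{\phi\in\Phi}J(\pi_{\sigma,\phi})\le J^*$ is immediate since each $\pi_{\sigma,\phi}$ is itself a valid policy. For the lower bound, fix a deterministic optimal policy $\pi^*$ of the MDP (which exists by standard MDP theory) and take the ``constant'' deviation $\phi^*(s,a):=\pi^*(s)$ for all $(s,a)$, which discards the mediator's recommendation and plays $\pi^*$. Then $\pi_{\sigma,\phi^*}(a'|s)=\sum_{a:\phi^*(s,a)=a'}\sigma(a|s)=\mathbb{1}[a'=\pi^*(s)]$, so $\pi_{\sigma,\phi^*}=\pi^*$ regardless of $\sigma$, giving $\max_{\phi\in\Phi}J(\pi_{\sigma,\phi})\ge J(\pi^*)=J^*$. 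This uses that the single-agent deviation class is complete (all maps $\cS\times\cA\to\cA$); if one wishes to work with a restricted $\Phi$, I would instead state the hypothesis that $\Phi$ is rich enough to realize some optimal policy and note this explicitly.

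Given the key claim, the rest is a one-line computation: substituting $\regret(\sigma)=J^*-J(\pi_{\sigma})$ and $\regret(\sigma_E)=J^*-J(\pi_{\sigma_E})$ yields $\regret(\sigma)-\regret(\sigma_E)=\pp{J^*-J(\pi_{\sigma})}-\pp{J^*-J(\pi_{\sigma_E})}=J(\pi_{\sigma_E})-J(\pi_{\sigma})$, which is exactly the single-agent value gap. I would present this as a short display right after the two substitutions.

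The main obstacle is the key claim, and within it the handling of \emph{stochastic} mediator policies: one cannot in general realize an arbitrary target policy as the pushforward $\phi\circ\sigma$ of a deterministic $\phi$, so the argument must route through the recommendation-ignoring deviation $\phi^*$ rather than trying to ``correct'' $\sigma$ pointwise. Everything else is bookkeeping with the definition of $\regret$ and the collapse of the outer maxima when $m=1$.
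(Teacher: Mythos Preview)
Your proposal is correct and follows essentially the same route as the paper: both arguments hinge on the observation that in the single-agent case $\max_{\phi\in\Phi}J(\pi_{\sigma,\phi})=J(\pi^*)$ independently of $\sigma$, after which the regret gap collapses to $\bigl(J(\pi^*)-J(\pi_\sigma)\bigr)-\bigl(J(\pi^*)-J(\pi_{\sigma_E})\bigr)=J(\pi_{\sigma_E})-J(\pi_\sigma)$. The paper simply asserts that ``a strategy deviation in single-agent MDP is equivalent to taking another policy'' and jumps straight to $\regret(\sigma)=J(\pi^*)-J(\pi_\sigma)$, whereas you explicitly prove both directions of the key claim (the upper bound via $\pi_{\sigma,\phi}$ being a valid policy, the lower bound via the recommendation-ignoring deviation $\phi^*(s,a)=\pi^*(s)$) and flag the implicit completeness assumption on $\Phi$; this extra care is welcome but does not constitute a different approach.
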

\begin{proof}\label{pf:thm:equivalence-single}
    For single-agent MDP, we ignore the index $i$ in the following proof. A strategy deviation in single-agent MDP is equivalent to taking another policy, because there are no other agents affecting the dynamics of the agent. We have $$\regret(\sigma)=\max_{\phi\in\Phi}(J(\pi_{\sigma,\phi})-J(\pi_\sigma))=J(\pi^*)-J(\pi_\sigma)$$ where $\pi^*$ is the optimal policy under the true reward function. Similarly, we have $$\regret({\sigma_E})=J(\pi^*)-J(\pi_{\sigma_E})$$
    Therefore, $$\regret(\sigma)-\regret(\sigma_E)=(J(\pi^*)-J(\pi_\sigma))-(J(\pi^*)-J(\pi_{\sigma_E}))=J(\pi_{\sigma_E})-J(\pi_{\sigma})$$
\end{proof}
In single-agent MDPs, the dynamics are fixed because no other agents affect the agent's dynamics, and therefore, the regret gap is equivalent to the value gap.

\clearpage
\section{Proofs}
\localtableofcontents
\subsection{Proof of Theorem \ref{thm:complete-class}}
\begin{proof}\label{pf:thm:complete-class}
    We prove the lemma by showing that the occupancy measures of $\pi_{\sigma}$ and $\pi_{\sigma_E}$ exactly match, i.e. $\rho^{\pi_{\sigma}}(s,\action)=\rho^{\pi_{\sigma_E}}(s,\action)$ for every $(s,\action)$.
    Consider a cooperative reward function $f_{s',\action'}=-\mathbf{1}(s=s',\action=\action')$.

    Under $f_{s,\action}$, we have $J(\pi_{\sigma})=-H\rho^{\pi_{\sigma}}(s,\action), J(\pi_{\sigma_E})=-H\rho^{\pi_{\sigma_E}}(s,\action)$. The maximum value performance the expert/learner can get after deviation is $0$ because the reward function is non-positive. ($0$ can be achieved by simply not taking $\action$ on $s$). 

    Therefore $\regret(\sigma)=0-(-H\rho^{\pi_{\sigma}}(s,\action))=H\rho^{\pi_{\sigma}}(s,\action)$, $\regret(\sigma_E)=0-(-H\rho^{\pi_{\sigma_E}}(s,\action))=H\rho^{\pi_{\sigma_E}}(s,\action)$.

    Since $\regret(\sigma)-\regret(\sigma_E)=0$, we know that $\rho^{\pi_\sigma}(s,\action)=\rho^{\pi_{\sigma_E}}(s,\action)$. This implies that the occupancy measures of two policies exactly match. As a result, $$\sup_{f\in\cF}\max_{i\in[m]}(J_i(\pi_{\sigma_E},f)-J_i(\pi_{\sigma},f))=0$$
\end{proof}
\subsection{Proof of Theorem \ref{thm:incomplete-class}}
\begin{proof}\label{pf:thm:incomplete-class}
    We can construct an example in normal form games, in which there are mulitple CEs with different pay-offs. We can let the $\sigma_E$ plays CE 1 and $\sigma$ plays CE 2. Therefore, although the regret gap $\regret(\sigma)-\regret(\sigma_E)=0$, the value gap $\max_{i\in[m]}(J_i(\pi_{\sigma_E})-J_i(\pi_{\sigma}))\neq0$. The NFG in \cref{fig:multiple-rewards} is an example, where $(a_1,a_1)$ and $(a_2,a_2)$ are two CEs with different values.
\end{proof}
\subsection{Proof of Theorem \ref{thm:regret-hard}}\label{app:subsec:regret-hard}
\begin{proof}\label{pf:thm:regret-hard}
    
    We prove the theorem by constructing such a Markov Game and policies that can get $\Omega(H)$ regret gap. For simplicity, we construct a two-player cooperative game where the reward is identical for all agents. Agents can not visit the same state at different time steps. These allow us to omit the index $i$ in the reward function in the proof. The notation $a_ia_j$ is used to represent the action pair $(a_i,a_j)$.
    
    The transition dynamics are illustrated in \cref{fig:mg1}, and the rewards are action free. The reward function $r(s_3)=r(s_5)=...=r(s_{2H-3})=1$, with all other states yielding a reward of $0$. Each agent has an action space $\cA_i=\{a_1,a_2,a_3\}$.

    The expert policy $\sigma_E$ satisfies $\sigma_E(a_1a_1|s_0)=1. \sigma_E(a_3a_3|s_1)=1.$ Action on all other states don't matter because the transition and the reward would be the same. The trained policy $\sigma$ satisfies $\sigma(a_1a_1|s_0)=1,\sigma(a_1a_1|s_1)=1$, and plays the same as the expert in all other states. 
    
    It is not hard to verify that $\sigma_E$ plays a CE under this reward function, which means $$\regret(\sigma_E)=0$$
    
    The worst deviation for $\sigma$ is to deviate action of agent 1 from playing $a_1$ to $a_2$ on both $s_0$ and $s_1$. We get
    $$\regret(\sigma)=H-2$$
    Therefore, the regret gap $\regret(\sigma)-\regret(\sigma_E)=H-2=\Omega(H)$
\end{proof}
\subsection{Proof of Theorem \ref{thm:CE}}
\begin{proof}\label{pf:thm:CE}
    From the definition of CE, we know $\regret(\sigma_E)\leq \delta_1$. Therefore,
    \begin{equation}
        \regret(\sigma)=\regret(\sigma_E)+(\regret(\sigma)-\regret(\sigma_E))\leq\delta_1+\delta_2
    \end{equation}
    Thus, we know that $\sigma$ induces a $\delta_1+\delta_2$-approximate CE.
\end{proof}
\subsection{Proof of Theorem \ref{thm:bc-value}}
\begin{proof}\label{pf:thm:bc-value}
    For any $i$, we can view multi-agent problem as a single agent MDP over the joint action space under reward function $r_i$. Following the proof in \citet{ross2010efficient,swamy2021moments}, we can prove $J_i(\pi_{\sigma_E})-J_i(\pi_{\sigma})\leq O(\epsilon H^2)$. Therefore, $\max_{i\in[m]}(J_i(\pi_{\sigma_E})-J_i(\pi_{\sigma}))\leq O(\epsilon H^2)$.
\end{proof}

\subsection{Proof of Theorem \ref{thm:irl-value}}
\label{app:proofs}
\begin{proof}\label{pf:thm:irl}
    For any $i$, $$J_i(\pi_{\sigma_E})-J_i(\pi_{\sigma})\leq\sup_{f\in\cF}\bE_{\xi\sim\pi_{\sigma_E}}\left[\sum_{h=1}^{H}f(s_h,\action_h)\right]-\bE_{\xi\sim\pi_{\sigma}}\left[\sum_{h=1}^{H}f(s_h,\action_h)\right]\leq\epsilon H$$Therefore, $\max_{i\in[m]}(J_i(\pi_{\sigma_E})-J_i(\pi_{\sigma}))\leq O(\epsilon H)$.
\end{proof}
\subsection{Proof of Theorem \ref{thm:BC-coverage}}
\begin{proof}\label{pf:thm:bc-coverage}
    With \cref{assump:coverage}, we know that $$\bE_{s\sim d^{\pi_\sigma}}[\ell(\sigma_E(s),\sigma(s))]\leq\frac{1}{\beta}\bE_{s\sim d^{\pi_{\sigma_E}}}[\ell(\sigma_E(s),\sigma(s))]\leq\frac{\epsilon}{\beta}$$
    By \cref{lem:on-policy-error}, we get $$ J_i(\pi_{\sigma_E})-J_i(\pi_{\sigma})\leq O\left(\frac{1}{\beta}\epsilon u H\right)$$
    For any deviation $\phi_i$, $$\bE_{s\sim d^{{\pi}_{\sigma,\phi_i}}}[\TV({\pi}_{\sigma_E,\phi_i}(s),{\pi}_{\sigma,\phi_i}(s))]\leq\bE_{s\sim d^{{\pi}_{\sigma,\phi_i}}}[\TV({\pi}_{\sigma_E}(s),{\pi}_{\sigma}(s))]\leq\frac{1}{\beta}\bE_{s\sim d^{\pi_{\sigma_E}}}[\TV(\sigma_E(s),\sigma(s))]\leq\frac{\epsilon}{\beta}$$
    By \cref{lem:on-policy-error}, we get $$J_i(\pi_{\sigma,\phi_i})-J_i({\pi}_{\sigma_E,\phi_i})\leq O\left(\frac{1}{\beta}\epsilon u H\right)$$
    
    Therefore,
    \begin{equation}
        \begin{aligned}
            J_i(\pi_{\sigma,\phi_i}) - J_i(\pi_{\sigma}) &= (J_i(\pi_{\sigma,\phi_i}) - J_i({\pi}_{\sigma_E,\phi_i})) + (J_i({\pi}_{\sigma_E,\phi_i})) - J_i(\pi_{\sigma_E})) + (J_i(\pi_{\sigma_E}) -  J_i(\pi_{\sigma}))\\
            &\leq J_i({\pi}_{\sigma_E,\phi_i})-J_i(\pi_{\sigma_E})+O\left(\frac{1}{\beta}\epsilon u H\right)
        \end{aligned}
    \end{equation}
    Taking the maximum over $i,\phi_i$, we get
    $$\regret(\sigma)-\regret(\sigma_E)\leq O\left(\frac{1}{\beta}\epsilon u H\right)$$
    
\end{proof}
\subsection{Proof of Theorem \ref{thm:lower-bound-coverage}}
\begin{proof}\label{pf:thm:lower-bound-coverage}
    We prove the theorem by constructing such a Markov Game policies that can get $\Omega(\frac{1}{\beta}\epsilon uH)$ regret gap. We consider the two-player cooperative game similar to the example in \cref{thm:regret-hard}. What we need to do is to slightly modify the MG and the policy to satisfy \cref{assump:recoverability} and \cref{assump:coverage}. The rewards are action free. Let $u'=\lfloor u\rfloor$, the reward function 
 $r(s_3)=r(s_5)=...=r(s_{2u'-3})=1$, with all other states yielding a reward of $0$. The transition of the MG is shown in \cref{fig:example-bc-coverage}. We know that the value is between $[0,u']$ for any policy, which means \cref{assump:recoverability} is satisfied.
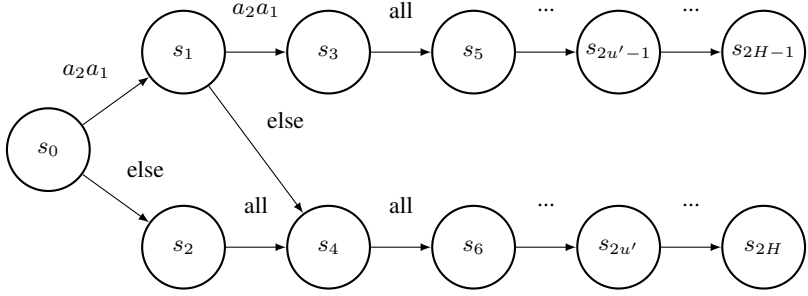
\begin{figure}
    \centering
    \begin{tikzpicture}[auto,node distance=8mm,>=latex,font=\small]
    \tikzset{every node/.style={circle, minimum size=1.1cm, inner sep=2pt}}
    
    \tikzstyle{round}=[thick,draw=black,circle]

    \node[round] (s0) {$s_0$};
    \node[round,above right=5mm and 10mm of s0] (s1) {$s_1$};
    \node[round,below right=5mm and 10mm of s0] (s2) {$s_2$};
    \node[round,right=8mm of s1] (s3) {$s_3$};
    \node[round,right=8mm of s2] (s4) {$s_4$};
    \node[round,right=8mm of s3] (s5) {$s_5$};
    \node[round,right=8mm of s4] (s6) {$s_6$};
    \node[round,right=8mm of s5] (s7) {$s_{{2u'-1}}$};
    \node[round,right=8mm of s6] (s8) {$s_{2u'}$};
    \node[round,right=8mm of s7] (s9) {$s_{{2H-1}}$};
    \node[round,right=8mm of s8] (s10) {$s_{2H}$};

    \draw[->] (s0) -- (s1) node[midway, auto] {$a_2a_1$};
    \draw[->] (s0) -- (s2) node[midway, auto] {else};
    \draw[->] (s2) -- (s4) node[midway, auto] {all};
    \draw[->] (s1) -- (s4) node[midway, auto] {else};
    \draw[->] (s1) -- (s3) node[midway, auto] {$a_2a_1$};
    \draw[->] (s3) -- (s5) node[midway, auto] {all};
    \draw[->] (s4) -- (s6) node[midway, auto] {all};
    \draw[->] (s5) -- (s7) node[midway, auto] {...};
    \draw[->] (s6) -- (s8) node[midway, auto] {...};
    \draw[->] (s7) -- (s9) node[midway, auto] {...};
    \draw[->] (s8) -- (s10) node[midway, auto] {...};

\end{tikzpicture}
    \caption{Example of $\Omega(\frac{1}{\beta}\epsilon u H)$ regret gap for J-BC and J-IRL}
    \label{fig:example-bc-coverage}
\end{figure}

    Let $\sigma_E$ be the policy that $\sigma_E(a_1a_1|s_0)=1-2\beta,\sigma_E(a_2a_1|s_0)=2\beta, \sigma_E(a_2a_1|s1)=\frac{1}{2}, \sigma_E(a_3a_3|s_1)=\frac{1}{2}$. Action at all other states doesn't matter because the transition and the reward would be the same. $\sigma_E$ satisfies \cref{assump:coverage}.

    Let trained policy $\sigma$ be the policy that $\sigma(a_1a_1|s_0)=1-2\beta,\sigma(a_2a_1|s_0)=2\beta, \sigma(a_2a_1|s1)=\frac{1}{2}, \sigma(a_1a_1|s1)=\frac{\epsilon H}{2\beta}, \sigma(a_3a_3|s_1)=\frac{1}{2}-\frac{\epsilon H}{2\beta}$. $\sigma$ and $\sigma_E$ only differs at $s_1$.

    Behavior cloning error of $\sigma$ satisfies $$\bE_{s\sim d^{\pi_{\sigma_E}}}[\ell_{\TV}(\sigma_E(s),\sigma(s))]\leq2\beta\cdot\frac{\epsilon H}{2\beta}\cdot\frac{1}{H}=\epsilon$$
    It is not hard to verify, the worst deviation for $\pi_{\sigma_E}$ is to deviate action of agent 1 at $s_0$ from playing $a_1$ to $a_2$, and thus $$\regret(\sigma_E)=\frac{1}{2}(1-2\beta)(u'-2)$$
    The worst deviation of $\pi_{\sigma}$ is to deviate action of agent 1 from playing $a_1$ to $a_2$ at $s_0$ and $s_1$.
    $$\regret(\sigma)=\frac{1}{2}(1-2\beta)(u'-2)+\frac{\epsilon H}{2\beta}(u'-2)$$
    Therefore, the regret gap $\regret(\sigma)-\regret(\sigma_E)=\frac{\epsilon H}{2\beta}(u'-2)=\Omega(\frac{1}{\beta}\epsilon u H)$.
\end{proof}
\subsection{Proof of Theorem \ref{thm:IRL-coverage}}
\begin{proof}\label{pf:thm:IRL-coverage}
    We prove it by showing that under complete reward function class $\cF$, low IRL error will imply low BC error, and then apply \cref{thm:BC-coverage}.

    When $\cF=[-1,1]^{|\cS||\cA|}$,
    \begin{equation}
        \begin{aligned}
            &\sup_{f\in\cF}\bE_{\pi_{\sigma_E}}\left[\frac{\sum_{h=1}^{H}f(s_h,\action_h)}{H}\right]-\bE_{\pi_\sigma}\left[\frac{\sum_{h=1}^{H}f(s_h,\action_h)}{H}\right]\\
            =&\sup_{f\in\cF}\sum_{s,\action}[\rho^{\pi_{\sigma_E}}(s,\action)-\rho^{\pi_\sigma}(s,\action)]f(s,\action)\\
            =&\sum_{s,\action}\left|\rho^{\pi_{\sigma_E}}(s,\action)-\rho^{\pi_\sigma}(s,\action)\right|
        \end{aligned}
    \end{equation}
    Therefore, we have $\sum_{s,\action}|\rho^{\pi_{\sigma_E}}(s,\action)-\rho^{\pi_\sigma}(s,\action)|\leq\epsilon$.
    \begin{equation}
        \begin{aligned}
            &\sum_{s,\action}|\rho^{\pi_{\sigma_E}}(s,\action)-\rho^{\pi_{\sigma}}(s,\action)|\\
            =&\sum_{s,\action}|d^{\pi_{\sigma_E}}(s){\sigma_E}(\action|s)-d^{\pi_{\sigma}}(s)\sigma(\action|s)|\\
            =&\sum_{s,\action}|d^{\pi_{\sigma_E}}(s){\sigma_E}(\action|s)-d^{\pi_{\sigma_E}}(s){\sigma}(\action|s)+d^{\pi_{\sigma_E}}(s){\sigma}(\action|s)-d^{\pi_{\sigma}}(s)\sigma(\action|s)|\\
            \geq&\sum_{s,\action}(|d^{\pi_{\sigma_E}}(s){\sigma_E}(\action|s)-d^{\pi_{\sigma_E}}(s){\sigma}(\action|s)|-|d^{\pi_{\sigma_E}}(s){\sigma}(\action|s)-d^{\pi_{\sigma}}(s)\sigma(\action|s)|)\\
            =&\bE_{s\sim d^{\pi_{\sigma_E}}}[\TV(\sigma_E(s),\sigma(s))]-\sum_{s}|d^{\pi_{\sigma_E}}(s)-d^{\pi_{\sigma}}(s)|\\
            =&\bE_{s\sim d^{\pi_{\sigma_E}}}[\TV(\sigma_E(s),\sigma(s))]-\sum_{s}\left|\sum_a\left[\rho^{\pi_{\sigma_E}}(s,\action)-\rho^{\pi_{\sigma}}(s,\action)\right]\right|\\
            \geq&\bE_{s\sim d^{\pi_{\sigma_E}}}[\TV(\sigma_E(s),\sigma(s))]-\sum_{s,a}|\rho^{\pi_{\sigma_E}}(s,\action)-\rho^{\pi_{\sigma}}(s,\action)|\\\geq&\bE_{s\sim d^{\pi_{\sigma_E}}}[\TV(\sigma_E(s),\sigma(s))]-\epsilon
        \end{aligned}
    \end{equation}
    Therefore, we get $$\bE_{s\sim d^{\pi_{\sigma_E}}}[\TV(\sigma_E(s),\sigma(s))]\leq2\epsilon$$
    Directly applying \cref{thm:BC-coverage}, we get $\regret(\sigma)-\regret(\sigma_E)\leq O\left(\frac{1}{\beta}\epsilon uH\right)$.
\end{proof}
\subsection{Proof of Corollary \ref{cor:lower-bound-coverage-irl}}
\begin{proof}\label{pf:cor:lower-bound-coverage-irl}
    Consider the same example in proof of \cref{thm:lower-bound-coverage} with parameter $\epsilon'$. In the example, the only difference between the occupancy measures of two policies are $\rho(s,\action)$ at state $s_1$. Therefore,
    \begin{equation}
        \begin{aligned}
            &\sup_{f\in\cF}\bE_{\pi_{\sigma_E}}\left[\frac{\sum_{h=1}^{H}f(s_h,\action_h)}{H}\right]-\bE_{\pi_\sigma}\left[\frac{\sum_{h=1}^{H}f(s_h,\action_h)}{H}\right]\\
            =&\sup_{f\in\cF}\sum_{s,\action}[\rho^{\pi_{\sigma_E}}(s,\action)-\rho^{\pi_\sigma}(s,\action)]f(s,\action)\\
            \leq&\sum_{s,\action}|\rho^{\pi_{\sigma_E}}(s,\action)-\rho^{\pi_\sigma}(s,\action)|\\
            \leq&|\rho^{\pi_{\sigma_E}}(s_1,a_3a_3)-\rho^{\pi_\sigma}(s_1,a_3a_3)|+|\rho^{\pi_{\sigma_E}}(s_1,a_1a_1)-\rho^{\pi_\sigma}(s_1,a_1a_1)|\\=&\frac{1}{H}\left(2\beta\cdot\frac{\epsilon' H}{2\beta}\cdot2\right)=2\epsilon'
        \end{aligned}
    \end{equation}
    Let $\epsilon'=\frac{1}{2}\epsilon$. Then the regret gap $\regret(\sigma)-\regret(\sigma_E)=\frac{\epsilon H}{4\beta}(u'-2)=\Omega(\frac{1}{\beta}\epsilon u H)$.
\end{proof}

\subsection{Proof of Theorem \ref{thm:MA-ALICE}}
\begin{proof}\label{pf:thm:MA-ALICE}
From the definition of $\ell_{\textsc{MALICE}}$, we know
\begin{equation}
    \begin{aligned}
            \ell_{\textsc{MALICE}}(\sigma,D_E,\sigma)
            &= \max_{i\in[m]}\max_{\phi_i}\bE_{s\sim d^{\pi_{\sigma_E}}}\left[\frac{d^{\pi_{\sigma,\phi_i}}}{d^{\pi_{\sigma_E}}}\ell(\pi_E(s),\pi(s))\right]\\
            &\geq \max_{i\in[m]}\max_{\phi_i}\bE_{s\sim d^{\pi_{\sigma_E}}}\left[\frac{d^{\pi_{\sigma,\phi_i}}}{d^{\pi_{\sigma_E}}}\ell({\pi_E}_{\phi_i}(s),{\pi}_{\phi_i}(s))\right]\\
            &\geq \max_{i\in[m]}\max_{\phi_i}\bE_{s\sim d^{{\pi}_{\sigma,\phi_i}}}\left[\ell({\pi_E}_{\phi_i}(s),{\pi}_{\phi_i}(s))\right]
    \end{aligned}
\end{equation}
    From \cref{lem:on-policy-error}, we know that for all $i,\phi_i$, we have $$J_i(\pi_{\sigma,\phi_i})-J_i({\pi}_{\sigma_E,\phi_i})\leq O(\epsilon uH)$$ And
    $$J_i(\pi_{\sigma_E}) -  J_i(\pi_{\sigma})\leq O(\epsilon uH)$$
    Therefore, we get
    \begin{equation}
        \begin{aligned}
            J_i(\pi_{\sigma,\phi_i}) - J_i(\pi_{\sigma}) &= (J_i(\pi_{\sigma,\phi_i}) - J_i({\pi}_{\sigma_E,\phi_i})) + (J_i({\pi}_{\sigma_E,\phi_i})) - J_i(\pi_{\sigma_E})) + (J_i(\pi_{\sigma_E}) -  J_i(\pi_{\sigma}))\\
            &\leq J_i({\pi}_{\sigma_E,\phi_i})-J_i(\pi_{\sigma_E})+O\left(\epsilon u H\right)
        \end{aligned}
    \end{equation}
    Taking the maximum over $i,\phi_i$, we get
    $$\regret(\sigma)-\regret(\sigma_E)\leq O\left(u\epsilon H\right)$$
\end{proof}
\subsection{Proof of Theorem \ref{thm:lower-bound-ALICE}}
\begin{proof}\label{pf:thm:lower-bound-ALICE}
We prove the theorem by constructing such a Markov Game policies that \texttt{MALICE} can get $\Omega(\epsilon uH)$ regret gap. We consider a single-agent MDP shown in \cref{fig:example-alice}. The rewards are action free. Let $u'=\lfloor u\rfloor$, the reward function 
 $r(s_1)=r(s_3)=...=r(s_{2u'-3})=1$, with all other states yielding a reward of $0$. The transition of the MDP is shown in \cref{fig:example-alice}. We know that the value is between $[0,u']$ for any policy, and thus \cref{assump:recoverability} is satisfied.

 Let $\sigma_E$ be the policy that $\sigma_E(a_1|s_0)=1-\beta,\sigma_E(a_2|s_0)=\beta$. Action at all other states doesn't matter because the transition and the reward would be the same. It is easy to verify that $\sigma_E$ satisfies \cref{assump:coverage}.

    Let trained policy $\sigma$ be the policy that $\sigma(a_1|s_0)=1-\beta-H\epsilon,\sigma(a_2|s_0)=\beta+H\epsilon$. $\sigma$ and $\sigma_E$ only differ at $s_0$.

 Now we verify that $ \ell_{\TV,\textsc{MALICE}}(\sigma,D_E,\sigma)\leq\epsilon$.

 Since $\sigma$ and $\sigma_E$ only differ at state $s_0$, and $d^{\pi_{\sigma,\phi_i}}(s_0)=1$ for any $i,\phi_i$, we have that $$\bE_{s\sim d^{\pi_{\sigma_E}}}\left[\frac{d^{\pi_{\sigma,\phi_i}}}{d^{\pi_{\sigma_E}}}\TV(\sigma_E(s),\sigma(s))\right]=\bE_{s\sim d^{\pi_{\sigma,\phi_i}}}\left[\TV(\sigma_E(s),\sigma(s))\right]\leq\frac{1}{H}\cdot H\epsilon=\epsilon$$
 Therefore, $$\ell_{\TV,\textsc{MALICE}}(\sigma,D_E,\sigma)
            = \max_{i\in[m]}\max_{\phi_i}\bE_{s\sim d^{\pi_{\sigma_E}}}\left[\frac{d^{\pi_{\sigma,\phi_i}}}{d^{\pi_{\sigma_E}}}\TV(\sigma_E(s),\sigma(s))\right]\leq\epsilon$$
 It is not hard to verify, the worst deviation for $\pi_E$ is to deviate action on $s_0$ from playing $a_2$ to $a_1$, and thus $$\regret(\pi_E,r)=\beta(u'-1)$$
    the worst deviation for $\pi_E$ is also to deviate action on $s_0$ from playing $a_2$ to $a_1$.
    $$\regret(\pi,r)=(\beta+\epsilon H)(u'-1)$$
    Therefore, the regret gap $\regret(\pi)-\regret(\pi_E)=\epsilon (u'-1)H=\Omega(\epsilon u H)$.
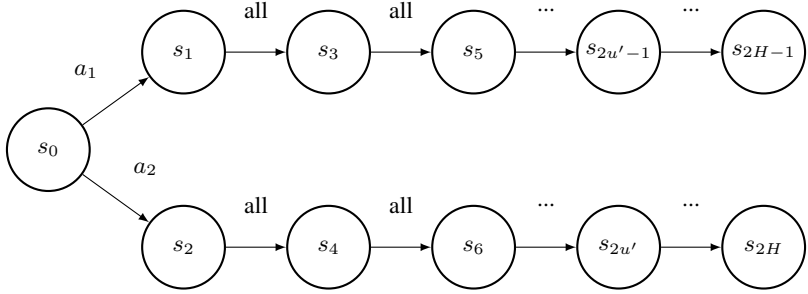
\begin{figure}
    \centering
    \begin{tikzpicture}[auto,node distance=8mm,>=latex,font=\small]
    \tikzset{every node/.style={circle, minimum size=1.1cm, inner sep=2pt}}
    
    \tikzstyle{round}=[thick,draw=black,circle]

    \node[round] (s0) {$s_0$};
    \node[round,above right=5mm and 10mm of s0] (s1) {$s_1$};
    \node[round,below right=5mm and 10mm of s0] (s2) {$s_2$};
    \node[round,right=8mm of s1] (s3) {$s_3$};
    \node[round,right=8mm of s2] (s4) {$s_4$};
    \node[round,right=8mm of s3] (s5) {$s_5$};
    \node[round,right=8mm of s4] (s6) {$s_6$};
    \node[round,right=8mm of s5] (s7) {$s_{{2u'-1}}$};
    \node[round,right=8mm of s6] (s8) {$s_{2u'}$};
    \node[round,right=8mm of s7] (s9) {$s_{{2H-1}}$};
    \node[round,right=8mm of s8] (s10) {$s_{2H}$};

    \draw[->] (s0) -- (s1) node[midway, auto] {$a_1$};
    \draw[->] (s0) -- (s2) node[midway, auto] {$a_2$};
    \draw[->] (s2) -- (s4) node[midway, auto] {all};
    \draw[->] (s1) -- (s3) node[midway, auto] {all};
    \draw[->] (s3) -- (s5) node[midway, auto] {all};
    \draw[->] (s4) -- (s6) node[midway, auto] {all};
    \draw[->] (s5) -- (s7) node[midway, auto] {...};
    \draw[->] (s6) -- (s8) node[midway, auto] {...};
    \draw[->] (s7) -- (s9) node[midway, auto] {...};
    \draw[->] (s8) -- (s10) node[midway, auto] {...};

\end{tikzpicture}
    \caption{Example of $\Omega(\epsilon u H)$ regret gap for \texttt{MALICE} and \texttt{BLADES}}
    \label{fig:example-alice}
\end{figure}
\end{proof}
\subsection{Proof of Theorem \ref{thm:MA-DAGGER}}
\begin{proof}\label{pf:thm:MA-DAGGER}
From the definition of $\ell_{\textsc{BLADES}}$, we know
\begin{equation}
    \begin{aligned}
            \ell_{\textsc{BLADES}}(\sigma,\sigma)&=\max_{i\in[m]}\max_{\phi_i}\bE_{s\sim d^{{\pi}_{\sigma,\phi_i}}}\left[\ell({\sigma_E}(s),{\sigma}_(s))\right]\\
            &\geq \max_{i\in[m]}\max_{\phi_i}\bE_{s\sim d^{{\pi}_{\sigma,\phi_i}}}\left[\ell({\pi}_{\sigma_E,\phi_i}(s),{\pi}_{\sigma,\phi_i}(s))\right]
    \end{aligned}
\end{equation}
    From \cref{lem:on-policy-error}, we know that for all $i,\phi_i$, we have $$J_i(\pi_{\sigma,\phi_i})-J_i({\pi}_{\sigma_E,\phi_i})\leq O(\epsilon uH)$$ And
    $$J_i(\pi_{\sigma_E}) -  J_i(\pi_{\sigma})\leq O(\epsilon uH)$$
    Therefore, we get
    \begin{equation}
        \begin{aligned}
            J_i(\pi_{\sigma,\phi_i}) - J_i(\pi_{\sigma}) &= (J_i(\pi_{\sigma,\phi_i}) - J_i({\pi}_{\sigma_E,\phi_i})) + (J_i({\pi}_{\sigma_E,\phi_i})) - J_i(\pi_{\sigma_E})) + (J_i(\pi_{\sigma_E}) -  J_i(\pi_{\sigma}))\\
            &\leq J_i({\pi}_{\sigma_E,\phi_i})-J_i(\pi_{\sigma_E})+O\left(\epsilon u H\right)
        \end{aligned}
    \end{equation}
    Taking the maximum over $i,\phi_i$, we get
    $$\regret(\sigma)-\regret(\sigma_E)\leq O\left(\epsilon uH\right)$$
\end{proof}
\subsection{Proof of Theorem \ref{thm:lower-bound-DAGGER}}
\begin{proof}\label{pf:thm:lower-bound-DAGGER}
    Let MDP, expert policy $\sigma_E$ and the trained policy $\sigma$ be the same example in the proof of \cref{thm:lower-bound-ALICE}.
    
    Since $\sigma$ and $\sigma_E$ only differ at state $s_0$, and $d^{\pi_{\sigma,\phi_i}}(s_0)=1$ for any $i,\phi_i$, we have $$\bE_{s\sim d^{{\pi}_{\sigma,\phi_i}}}\left[\TV(\sigma_E(s),\sigma(s))\right]\leq\frac{1}{H}\cdot H\epsilon=\epsilon$$
    Therefore, the trained policy $\pi$ satisfies
 $$\ell_{\TV,\textsc{BLADES}}(\sigma,\sigma)
            = \max_{i\in[m]}\max_{\phi_i}\bE_{s\sim d^{{\pi}_{\sigma,\phi_i}}}\left[\TV({\sigma_E}(s),{\sigma}(s))\right]\leq\epsilon$$
    The regret gap $\regret(\sigma)-\regret(\sigma_E)=\epsilon (u'-1)H=\Omega(\epsilon u H)$.
\end{proof}
\section{Comparison with \texorpdfstring{\citet{goktas2023generative}}{Goktas et al.}}\label{app:comparison}

Recent work \citet{goktas2023generative} worked on similar problem as ours. We will highlight some of the difference between two works.

First, the learning goals are different. They focus on a problem of inverse game theory, where the goal is to recover a reward function to rationalize the expert's behavior, i.e. the expert policy plays an equilibrium under such a reward function. However, in our setting, instead of recovering a singe reward function, our goal is to learn a robust policy that get similar regret performance under a class of reward functions. We will show later that if the ultimate goal is to learn this robust policy, simply recovering a single reward function is not enough.

Second, the solution concepts are different. they work on Nash equilibrium, while in our setting, we focus on correlated equilibrium. We note that our algorithms also work for learning independent policies, by restricting the policy class to be a class of independent policies.

Third, in finite demonstration setting, their objective is to find a reward function which the learned policy plays a local NE, under the constraints that  $\ell_2$ difference of the observations for behaviors of two learned policy is small. We note that in general simply matching this difference is not enough to guarantee that the learned policy play an equilibrium. From \cref{thm:regret-hard}, we know that even if the occupancy measures of two policies exactly match, the regrets can still be significantly different under the same reward function.

In conclusion, they work on a inverse game theory style problem where the goal is to recover a single reward function to rationalize the agents behavior. We work on imitation learning problem, where the goal is not recovering a single reward function but learning a policy that matches the regret performance of the expert under a class of reward functions.

We will give examples in normal form games (NFG) to show that recovering a single reward function is not enough to learn a policy that minimizing the regret gap for a large class of reward functions. NFG can be viewed as an MG in which $H=1$ and $|\cS|=1$.

\begin{lemma}
    For an expert policy $\sigma_E$, there may exist multiple reward functions that rationalize it.
\end{lemma}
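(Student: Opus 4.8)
The plan is to prove the lemma by an explicit construction in a normal form game, using the observation already recorded in this section that an NFG is the special case of a Markov Game with $H=1$ and $|\cS|=1$. In that regime a mediator policy $\sigma_E$ is just a distribution over the joint action profile, and a reward function $r$ \emph{rationalizes} $\sigma_E$ precisely when $\regret(\sigma_E,r)=0$, i.e. $\sigma_E$ is a correlated equilibrium of the game induced by $r$ (cf. the definition of regret and CE). Hence it is enough to exhibit a single policy $\sigma_E$ together with two distinct reward profiles $r^{(1)}\neq r^{(2)}$ under which no agent can profit from any deviation.

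First I would fix a two-player, two-action game and let $\sigma_E$ put all of its mass on the profile $(a_1,a_1)$. Because $\sigma_E$ is a point mass, the correlated-equilibrium condition collapses to the ordinary pure-strategy Nash condition: $\regret(\sigma_E,r)=0$ iff, for each player, switching unilaterally from $a_1$ to the other action (the only nontrivial deviation) does not strictly increase that player's payoff when the opponent plays $a_1$. I would then take $r^{(1)}$ to be the pure coordination game (payoff $1$ on the diagonal and $0$ off it for both players), for which the condition holds trivially, and take $r^{(2)}$ to be a structurally different game in which $(a_1,a_1)$ is still each player's best response to $a_1$ — for example the game with payoff $1$ at $(a_1,a_1)$ and $0$ everywhere else, whose pure-equilibrium structure differs from that of $r^{(1)}$ (the former has $(a_2,a_2)$ as a second pure equilibrium, the latter does not). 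Checking the deviation inequalities is then a one-line verification per player per reward function.

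Since $\sigma_E$ is a correlated equilibrium under both $r^{(1)}$ and $r^{(2)}$, and these are genuinely different reward functions (not affine rescalings of one another, as witnessed by their differing best-response / equilibrium structure), the reward function rationalizing $\sigma_E$ is not unique, which is exactly the claim. There is no real technical obstacle here: the construction is elementary and the verification routine. The only thing that deserves care — and the point that makes the example worth stating — is choosing the second reward function so that it is \emph{substantively} different from the first rather than a trivial transformation of it, so that the example genuinely illustrates the non-identifiability of rewards from equilibrium behavior that the surrounding discussion relies on.
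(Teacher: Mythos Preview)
Your proposal is correct and follows essentially the same approach as the paper: an explicit two-player, two-action NFG with $\sigma_E$ the point mass on $(a_1,a_1)$, together with two distinct diagonal-style reward tables under which $(a_1,a_1)$ is a CE/NE. The paper's two reward functions differ only cosmetically from yours (it uses payoffs $(1,1)/(2,2)$ and $(1,1)/(1,1)$ on the diagonal rather than your $(1,1)/(1,1)$ and $(1,1)/(0,0)$), but the construction and verification are identical in spirit.
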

\begin{proof}
    We show this by an example of normal form games in \cref{fig:multiple-rewards}.
\begin{figure}
    \centering
    \begin{tabular}{|l|c|r|}
\hline
 $r$& $a_1$ & $a_2$ \\
 \hline
$a_1$ & (1,1) & (0,0) \\
\hline
$a_2$& (0,0) & (2,2) \\
\hline
\end{tabular}
    \begin{tabular}{|l|c|r|}
\hline
 $r'$& $a_1$ & $a_2$ \\
 \hline
$a_1$ & (1,1) & (0,0) \\
\hline
$a_2$& (0,0) & (1,1) \\
\hline
\end{tabular}
    \caption{Multiple reward functions rationalize $\sigma_E$}
    \label{fig:multiple-rewards}
\end{figure}
Consider the policy to be $\sigma_E(a_1a_1)=1$, then the expert plays CE/NE under both reward functions $r$ and $r'$, which means both reward functions rationalize $\sigma_E$.
\end{proof}
\begin{lemma}
    For a fixed reward function, There may exist multiple CE/NEs.
\end{lemma}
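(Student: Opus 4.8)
The plan is to exhibit a single normal-form game (equivalently, an MG with $H=1$ and $|\cS|=1$) that possesses at least two distinct correlated equilibria. The natural candidate is a pure coordination game, and in fact the reward function $r'$ already drawn in \cref{fig:multiple-rewards} suffices: it pays $(1,1)$ on the diagonal entries $(a_1,a_1)$ and $(a_2,a_2)$ and $(0,0)$ off the diagonal. First I would observe that the mediator policy $\sigma$ placing all mass on $(a_1,a_1)$ is a CE: conditioned on the recommendation $a_1$, each agent's unique best response is $a_1$ (deviating to $a_2$ strictly lowers its payoff from $1$ to $0$), so $\regret(\sigma,r')=0$. By the identical argument, the mediator policy $\sigma'$ placing all mass on $(a_2,a_2)$ also satisfies $\regret(\sigma',r')=0$, hence is a CE. Since $\sigma\neq\sigma'$, the game has (at least) two distinct CEs under the same fixed reward function.

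To connect this to the lemma's phrasing ``CE/NE,'' I would additionally note that both $\sigma$ and $\sigma'$ are \emph{product} distributions over the joint action space, so each corresponds to a pure-strategy Nash equilibrium of the game; thus the same example simultaneously witnesses multiplicity of Nash equilibria. I would also recall, as stated in the excerpt, that an NFG is the special case of a Markov Game with unit horizon and a single state, so the construction lives inside the model class studied in the paper and no further translation is needed.

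There is essentially no technical obstacle here — the only thing to be a little careful about is the definitional check that a point-mass mediator policy on a strict pure Nash equilibrium indeed has zero regret in the sense of \cref{eq:regret}, i.e. that for every agent $i$ and every deviation $\phi_i \in \Phi_i$ we have $J_i(\pi_{\sigma,\phi_i}) \leq J_i(\pi_\sigma)$. This is immediate because when the recommendation is deterministic, any deviation $\phi_i$ either reproduces the recommended action or switches it to $a_2$ (resp.\ $a_1$), and the latter strictly decreases agent $i$'s payoff while leaving the other agent's action fixed at the recommendation. This verification, together with pointing at the two point-mass policies, completes the argument; I would keep the write-up to a couple of sentences and a reference to \cref{fig:multiple-rewards}.
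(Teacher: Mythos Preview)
Your argument is correct. You and the paper both point at \cref{fig:multiple-rewards}, but you pick the \emph{other} reward table: the paper works with $r$ (payoffs $(1,1)$ and $(2,2)$ on the diagonal) and exhibits the pure equilibrium $\sigma_1(a_1,a_1)=1$ together with the fully mixed Nash equilibrium $\sigma_2$ with marginals $(2/3,1/3)$ on each side, whereas you work with the symmetric game $r'$ and exhibit the two pure strict Nash equilibria $(a_1,a_1)$ and $(a_2,a_2)$. Your route is a touch more elementary, since verifying that a strict pure profile has zero regret is a one-line check, while the paper's choice requires (implicitly) computing the indifference condition for the mixed equilibrium. The paper's choice, on the other hand, dovetails with the surrounding discussion in \cref{app:comparison}, where $r$ is the ``true'' reward and $r'$ the recovered one; using $r$ keeps the running example consistent. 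Either construction proves the lemma.
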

\begin{proof}
    For reward function $r$ in \cref{fig:multiple-rewards}, we can construct such two policies $\sigma_1,\sigma_2$. For $\sigma_1$, let $\sigma_1(a_1,a_1)=1$. Let $\sigma_2(a_1,a_1)=\frac{4}{9},\sigma_2(a_1,a_2)=\sigma_2(a_2,a_1)=\frac{2}{9},\sigma_2(a_2,a_2)=\frac{1}{9}$. Tt is not hard to verify that both $\sigma_1$ and $\sigma_2$ play CE/NE under the reward function $r$.
\end{proof}
Therefore, since there is no one-to-one mapping between the equilibria and the pay-off structures, simply recovering a single reward function might not help recover a policy that gets small regret gap.

For example, the true reward function is $r$ in \cref{fig:multiple-rewards}, and expert policy $\sigma_E$ satisfies $\sigma_E(a_1,a_1)=1$. The algorithm may recover $r'$ in \cref{fig:multiple-rewards}, and a trained policy $\sigma$ that plays NE/CE under recovered reward function $r'$ would be $\sigma(a_1,a_1)=\sigma(a_1,a_2)=\sigma(a_2,a_1)=\sigma(a_2,a_2)=\frac{1}{4}$. However, this trained policy $\sigma$ does not play NE/CE under the true reward function $r$.

\newpage

\end{document}